\newtheorem{theorem}{Theorem}
\newtheorem{assumption}{Assumption}
\newtheorem{lemma}{Lemma}%
\newcommand*{\R}{{\mathbb R}}
\newcommand*{\E}{{\mathbb E}}
\newcommand{\sm}[2]{\begin{smallmatrix}\item1\\\item2 \end{smallmatrix}}
\def\R{\mathbb{R}}
\def\S{\mathcal{S}}
\newcommand\ddfrac[2]{\frac{\displaystyle \item1}{\displaystyle \item2}}
\newcommand{\mc}[1]{\mathbb{\item1}}
\begin{document}

\title[Implicitly normalized forecaster with clipping for linear and non-linear heavy-tailed multi-armed bandits.]{Implicitly normalized forecaster with clipping for linear and non-linear heavy-tailed multi-armed bandits.}

\author[1,2,4]{\fnm{Yuriy} \sur{Dorn}}\email{dornyv@yandex.ru}

\author[2]{\fnm{Nikita} \sur{Kornilov}}\email{kornilov.nm@phystech.edu}

\author[2]{\fnm{Nikolay} \sur{Kutuzov}}\email{kutuzov.nv@phystech.edu}

\author[5]{\fnm{Alexander} \sur{Nazin}}\email{nazin.alexander@gmail.com}

\author[6]{\fnm{Eduard} \sur{Gorbunov}}\email{eduard.gorbunov@mbzuai.ac.ae}

\author[2, 3, 7]{\fnm{Alexander} \sur{Gasnikov}}\email{gasnikov@yandex.ru}

\affil[1]{\orgname{MSU Institute for Artificial Intelligence}, \orgaddress{\city{Moscow}, \country{Russia}}}

\affil[2]{\orgname{Moscow Institute of Physics and Technology}, \orgaddress{\city{Dolgoprudny}, \country{Russia}}}

\affil[3]{\orgname{Skoltech}, \orgaddress{\city{Moscow}, \country{Russia}}}

\affil[4]{\orgname{Institute for Information Transmission Problems}, \orgaddress{\city{Moscow}, \country{Russia}}}

\affil[5]{\orgname{V.A. Trapeznikov Institute of Control Sciences of Russian Academy of Sciences}, \orgaddress{\city{Moscow}, \country{Russia}}}

\affil[6]{\orgname{Mohamed bin Zayed University of Artificial Intelligence}, \orgaddress{\city{Abu-Dhabi}, \country{United Arab Emirates}}}

\affil[7]{\orgname{ISP RAS Research Center for Trusted Artificial Intelligence}, \orgaddress{\city{Moscow}, \country{Russia}}}


\maketitle

\begin{abstract}

  Abstract: The Implicitly Normalized Forecaster (INF) algorithm is considered to be an optimal solution for adversarial multi-armed bandit (MAB) problems. However, most of the existing complexity results for INF rely on restrictive assumptions, such as bounded rewards. Recently, a related algorithm was proposed that works for both adversarial and stochastic heavy-tailed MAB settings. However, this algorithm fails to fully exploit the available data.

In this paper, we propose a new version of INF called the Implicitly Normalized Forecaster with clipping (INF-clip) for MAB problems with heavy-tailed reward distributions. We establish convergence results under mild assumptions on the rewards distribution and demonstrate that INF-clip is optimal for linear heavy-tailed stochastic MAB problems and works well for non-linear ones. Furthermore, we show that INF-clip outperforms the best-of-both-worlds algorithm in cases where it is difficult to distinguish between different arms.

\end{abstract}

\keywords{multi-armed bandits 
\and clipping
\and online mirror descent}

\section{Introduction}\label{sec:intro}

\subsection{Multi-armed bandits}

The multi-armed bandit (MAB) problem covers an essential area of optimal sequential decision making for discrete systems under uncertainty, adjoining such important areas as optimization methods, optimal control, learning and games,  behavior and automata, etc. In this model decision maker encounter a finite number of possible decision options (arms), and each applied decision leads to random reward with unknown distribution. An agent then aims to maximize the received rewards in a given time horizon. 

The first published paper by Thompson 
\cite{thompson1933likelihood} is close to the essence of the bandit problem setup. Formal problem setting was proposed by H. Robbins
\cite{robbins1952some} and gave rise to a systematic study (\cite{berry1985bandit},
\cite{gittins2011multi},
\cite{cesa2006prediction}, \cite{slivkins2019introduction}
and \cite{bubeck2012regret}) of bandit issues. MAB ideas influence many other closely related research fields, such as reinforcement learning
\cite{sutton1998reinforcement} and biology system modeling and automation theory (\cite{tsetlin1969issledovaniya},
\cite{tsetlin1973automaton}, \cite{varvsavskij1973kollektivnoe}, and \cite{nazin1986adaptive}). 

Another classical MAB problem, namely the adversarial MAB problem, assumes that losses are not stochastic, but are chosen by the adversarial agent \cite{auer2002nonstochastic}. The notion of average regret can still be applied, but the stochastic uncertainty comes from the randomization in the forecaster's strategy and not from the environment. The main approach to dealing with adversarial multi-armed bandits is based on online convex optimization (see \cite{flaxman2004online}, \cite{orabona2019modern} and \cite{hazan2016introduction} for reference).

\subsection{Algorithms for stochastic MAB}

Most algorithms for stochastic multi-armed bandits based on the UCB-strategy are proposed in the seminal paper \cite{auer2002finite}. This strategy assumes that one can construct Upper Confidence Bounds for the average reward using empirical means and confidence intervals. When considering stochastic multi-armed bandits, the vast majority of authors assume a sub-Gaussian distribution of rewards. But in some practical cases (for example, in finance \cite{Rachev2003}  or blockchain networks \cite{wang2019flash}) the rewards distribution have heavy-tails or can be adversarial. In the network analysis context, heavy tails can be found in online social networks \cite{choi2020rumor}, extreme random graphs \cite{dhara2020heavy}, the World Wide Web graph, and many other applications with power-law-distributed metrics \cite{barabasi1999emergence}. This has led to  extensive development of new algorithms for heavy-tailed stochastic MAB.

Probably the first of the UCB-based strategies for a non-(sub-)gaussian distribution, namely $\psi$-UCB strategy, is suboptimal (up to logarithmic factor), but requires that the distribution of the rewards has a finite moment generation function. In \cite{bubeck2013bandits}, the authors propose the Robust UCB algorithm and show that it is optimal under the assumption that the reward distribution for each action $i$ satisfies \begin{equation}\label{ass1}
    \E_{X \sim \mathbf{p}(a_i)} |X|^{1+\alpha} \leq M^{1+\alpha}
\end{equation} for some $\alpha \in (0, 1]$ and $M>0$. 

In \cite{medina2016no} authors consider linear stochastic multi-armed bandits (LinBET) and propose new algorithms, one based on dynamic truncation with regret bound $O \left ( T^{\frac{2+\alpha}{2(1+\alpha)} } \right )$ and another based on median of means with regret bound $O \left ( T^{\frac{1+2\alpha}{1+3\alpha}}  \right )$. Later in \cite{shao2018almost} authors prove $\Omega \left (T^{\frac{1}{1+\alpha}} \right )$ lower bound for LinBET problem setting and propose two UCB-based algorithms, which continue ideas from \cite{medina2016no} and enjoy sub-optimal (up to logarithmic factor) regret bound with high probability. There are few other works that exploit idea of mixing robust statistical estimators with UCB (\cite{zhong2021breaking}) which results in sub-optimal algorithms. In \cite{lu2019optimal} authors propose near optimal algorithm for Lipschitz stochastic MAB with heavy tails with regret bound $O \left ( T^{\frac{1+d_z\alpha}{1+\alpha+d_z\alpha}}  \right )$, where $d_z$ is zooming dimension. Another suboptimal algorithm for stochastic MAB was proposed in \cite{lee2020optimal}.

\subsection{Algorithms, based on online convex optimization approach}

Recently there was a new approach, where algorithms based on online mirror descent have been proposed for heavy-tailed stochastic MAB (\cite{zimmert2019optimal}) just like Exp3 was proposed for adversarial bandits(\cite{auer2002nonstochastic}). This leads us to another relevant line of works, related to stochastic optimization with heavy-tailed noise in gradients, i.e. under assumption $\E\left [ \|g_t\|^{1+\alpha}_{\infty} \right ] \leq M^{1+\alpha}$.
When $\alpha=1$ the above assumption reduces to the standard bounded second moment assumption \cite{nemirovski2009robust}. This case is studied relatively well both for standard minimization and online learning. As for the setup with $\alpha \in (0,1)$, the study of the methods' convergence is still under development. For minimization problems, the first in-expectation results are derived in \cite{nemirovskij1983problem} for convex problems. The work of \cite{zhang2020adaptive} extends the result to the strongly convex case and for the non-convex smooth case derives a new result under a similar assumption:
\begin{equation}
    \E\left [ \|g_t - \nabla f(x_t)\|^{1+\alpha}_{2} \right ] \leq \sigma^{1+\alpha}. \label{eq:bounded_central_alpha_moment}
\end{equation}
In \cite{vural2022mirror}, the results are extended to the case of uniformly convex functions. High-probability convergence results under condition \eqref{eq:bounded_alpha_moment} are derived in \cite{cutkosky2021high} in the non-convex case and under condition \eqref{eq:bounded_central_alpha_moment} in \cite{sadiev2023high}, as well as in the convex, strongly convex, non-convex cases in \cite{sadiev2023high}. Optimal (up to logarithmic factors) high-probability regret bounds under assumption $$\E_{X \sim \mathbf{p}(a_i)} |X|^{1+\alpha} \leq u$$ are obtained in \cite{zhang2022parameter}.

In the context of heavy-tailed MAB, the main example is \cite{huang2022adaptive}, where the authors propose the HTINF and AdaTINF algorithms, which perform almost optimally for both stochastic and adversarial settings and which exploit a solution scheme for adversarial bandits. Lately \cite{dann2023blackbox} shows that this algorithm is also almost optimal for linear MAB. 

In \cite{huang2022adaptive}, the authors also use an algorithmic scheme for adversarial bandits, i.e., it reconstruct losses for all arms (from bandit feedback to estimated full feedback) and use online mirror descent. As always, the heavy-tailed distribution affects the loss estimate. To solve this problem, the authors proposed to use a skip threshold, i.e. to truncate the samples if the loss is too big. This solves the problem but increases the number of samples since not all data are used.

Most of the algorithms mentioned before consider stochastic linear MAB. Even when using the algorithms for adversarial MAB for heavy-tailed stochastic MAB, one usually assumes the linearity condition. On the other hand, the so-called nonlinear bandits \cite{bubeck2012regret}, where the loss function (derived from online optimization notation) is not linear, are left without care.

\subsection{Our contributions and related works} \label{S:contrib}

Our work is closely related to \cite{huang2022adaptive}. In this paper, we propose an Implicitly Normalized Forecaster with clipping (INF-clip) algorithm, which is likewise based on online mirror descent for stochastic MAB with a heavy-tail reward distribution, but instead of truncation we use clipping. This allows us to use samples that were truncated in \cite{huang2022adaptive}, which leads to a better use of data and better convergence properties in experiments.

We show that the INF-clip guaranties a minimax optimal regret bound $O \left ( M n^{\frac{\alpha}{1+\alpha}} T^{\frac{1}{1+\alpha}}  \right )$ for heavy-tailed MAB with $n$ arms, i.e. if assumption \ref{ass1} holds for each arm. 

We also show that our algorithm can be applied to nonlinear bandits settings with adversarial noise setup and provide the corresponding convergence analysis. To the best of our knowledge, this is the first result with a convergence guarantee for nonlinear bandits with  $\alpha \in (0, 1]$.

\begin{table}[h!]
\centering
\begin{tabular}{|l||l||l||c||}
 \hline
 \small{Algorithm} & \small{Stochastic} & \small{Adversarial} & \small{Regret for heavy-tailed} \\
 & & & \small{stochastic MAB}\\
 \hline\hline
  Robust UCB (\cite{bubeck2013bandits}, 2013)&+&-&$O \left ( T^{\frac{1}{1+\alpha}}\cdot M \cdot n^{\frac{\alpha}{1+\alpha}} \cdot (\log T)^{\frac{\alpha}{1+\alpha}} \right)$\\
 APE (\cite{lee2020optimal}, 2020)&+&-&$O \left ( T^{\frac{1}{1+\alpha}}\cdot M \cdot n^{\frac{\alpha}{1+\alpha}} \cdot \log K \right)$\\
   \small{$\frac{1}{2}$-Tsallis-INF} (\cite{zimmert2019optimal}, 2019) & + & + &  $O \left ( \sqrt{Tn} \right)$ (only for $\alpha = M = 1$)\\
  \small{HTINF (\cite{huang2022adaptive}, 2022)} & + & +&  $O \left ( T^{\frac{1}{1+\alpha}}\cdot M \cdot n^{\frac{\alpha}{1+\alpha}} \right)$\\
 \small{INF-clip (this work)} & + & + &  $O \left ( T^{\frac{1}{1+\alpha}}\cdot M \cdot n^{\frac{\alpha}{1+\alpha}} \right)$\\
  \hline
\end{tabular}
 \caption{An overview of related algorithms. Here $T$ is the number of rounds, $n$ is the number of arms}
\end{table}

\subsection{Paper organization}

In Section \ref{sec:Prem}, we introduce a formal problem statement for stochastic MAB with heavy tails. We also introduce a general scheme on how to use online mirror descent to solve MAB problems. Next, in Sections \ref{sec:INF-clip} and \ref{sec:nINF-clip} we introduce Implicitly Normalized Forecaster with Clipping algorithm for stochastic heavy-tailed MAB, linear and nonlinear cases, respectively. Also we introduce convergence analysis for each algorithm. We put the formal statement for the nonlinear bandits in Section  \ref{sec:nINF-clip} to make it easier to read. In Section  \ref{sec:Exp} we show the results of computational experiments. Some of the missing proofs from Section \ref{sec:nINF-clip} go to Section 
 \ref{sec: proofs}.

\section{Preliminaries}\label{sec:Prem}

\subsection{Notations}
 For $p \in [1,2]$ notation $||\cdot||_p$ is used for the standard $l_p$-norm, i.e.  $||x||_p = \left(\sum_{i=1}^n |x_i|^p\right)^{1/p}$. The corresponding dual norm is $||y||_{q}$ with $\frac{1}{p} + \frac{1}{q}=1$. We interchangeably use the notations $\|x\|_2$ and $\|x\|$.
 
 Set $B^p = \{x \in \R^n\mid ||x||_p \leq 1\}$ is a $p$-ball with center at $0$ and radius $1$, and $S^p = \{x \in \R^n\mid ||x||_p = 1\}$ is a $p$-sphere with center at $0$ and radius $1$. Finally, for $\tau > 0$ and convex set $\S\subset \R^n$ we denote $\S_\tau = \S + \tau\cdot B^2$. Set $\triangle_n = \left \{x \in R^n| \quad \sum_{i=1}^n x_i=1, \quad x \geq 0  \right \}$.

 The expectation of a random variable $X$ is denoted by $\E [X]$. The expected value for a given function $\phi(\cdot)$ that takes a random variable $\xi$ with a probability distribution $\mathcal{D}$ as argument is denoted by $\E_{\xi\sim \mathcal{D}} [\phi(\xi)]$. To simplify the notation, we interchangeably use the notations $\E_{\xi\sim \mathcal{D}} [\phi(\xi)]$, $\E_{\xi} [\phi(\xi)]$ and $\E_{\mathcal{D}} [\phi(\xi)]$.
 
 In case when $X$ represents the reward obtained by the algorithm in the stochastic MAB setting with $n$ arms, there are two sources of stochasticity: the stochastic nature of rewards for each arm and the randomization in the algorithm's arm-picking procedure. Suppose that the algorithm chooses $i$-th arm with probability $x_i$, and the reward for the $i$-th arm is represented by the random variable $X_i$. Then the notation $\E_{x} [X]$ represents the full expectation for $X$ with respect to the given probability distribution $x \in \triangle_n$, where $x = (x_1, \dots, x_n)$, i.e. $$\E_x[X] = \sum_{1\leq i\leq n} x_i \cdot \E[X_i].$$

\subsection{Stochastic MAB problem}

The stochastic MAB problem can be formulated as follows: an agent at each time step $t = 1,\dots, T$ chooses action $A_t$ from given action set $\mathcal{A} = (a_1, \dots, a_n)$ and suffer stochastic loss (or get reward) $l_t(A_t)$. Agent can observe losses only for one action at each step, namely, the one that he chooses. For each action $a_i$ there exists a probability density function for losses $\mathbf{p}(a_i)$ with mean loss $\nu_i$, an agent doesn't know PDF of any action in advance. At each round $t$ when action $a_i$ is chosen (i.e. $A_t = a_i$) stochastic loss $l_t(A_t)$ sampled from $\mathbf{p}(a_i)$ independently.

Agent goal is to minimize \textit{average regret}:
\begin{equation*}
    \frac{1}{T} \E [\mathcal{R}_T (u)] = \frac{1}{T} \E \left [ \sum_{t=1}^T \left (l_t(A_t) - l_t(u) \right ) \right ]
\end{equation*}
for a fixed competitor $u \in \mathcal{A}$.

\textbf{\textit{Assumption (heavy tails) 1:}}
There exist $\alpha >0$ and $M>0$, such that for each action $a_i$ (arm) stochastic reward $X_i$ satisfy
\begin{equation}
    \E\left [ |X_i|^{1+\alpha} \right ] \leq M^{1+\alpha}, \label{eq:bounded_alpha_moment}
\end{equation}
where  $\alpha \in (0,1]$. 

\subsection{Algorithmic scheme for MAB problems based on online mirror descent}

Online convex optimization algorithms (\cite{hazan2016introduction}, \cite{orabona2019modern}) are the standard choice for solving adversarial MAB problems, but have recently been considered for stochastic MAB with heavy-tailed noise, adversarial distributed noise and for non-linear MAB. In the following, we identify each action $a_i$ (arm) with index $i$.

The basic procedure can be described as follows:

\begin{enumerate}
    \item At each step $t$, agent choose action $A_t$ randomly from distribution $x_t \in \triangle_n$, i.e. $P[A_t = i] = x_{t, i}$ for each $i=1, \dots, n$.
    \item Play $A_t$ and observe reward for the action.
    \item Construct (artificial) loss function $l_t(x_t) = \langle g_t, x_t \rangle$ where $g_{t, i}$ is unbiased (w.r.t. $x_t$) loss estimator for action $i$. 
    \item Use online mirror descent to solve online convex optimization problem for artificial losses.
\end{enumerate}
This approach is closely related to prediction with expert advice (see Hedge and other related algorithms \cite{littlestone1994weighted}, \cite{cesa1997use}, \cite{freund1997decision}), namely, one can say that step (3) of the procedure serves to estimate the full feedback from the bandit's feedback, after that use an aggregated predictor, constructed via prox-function from online mirror descent. This scheme was introduced in the Exp3 algorithm for adversarial bandits in the seminal paper \cite{auer2002nonstochastic}.

\section{Implicitly Normalized Forecaster with clipping for linear heavy-tailed multi-armed bandits} \label{sec:INF-clip}

In this section, we present the Implicitly Normalized Forecaster with clipping (INF-clip). INF is an online mirror descent algorithm with Tsallis entropy as the prox applied to the multi-armed bandit (MAB) problem. 

Since the (sub)gradient is not observed in the MAB setting, it is substituted by a stochastic approximation. A standard choice of stochastic approximation of (sub)gradient $g_t \in \partial l(x_t)$ is an importance-weighted estimator: $$\bar{g}_{t,i} = \begin{cases} \frac{g_{t,i}}{x_{t, i}} & \text{if } i = A_t\\ 0 & \text{otherwise} \end{cases}$$

This is an unbiased estimator, i.e. $\mathrm{E}_{x_t}[\bar{g}_t] = g_t$. One of the drawbacks of this estimator is that in the case of small $x_{t,i}$ the value of $\bar{g}_{t,i}$ can be arbitrarily large. In the case when the distribution of $g_t$ has heavy tails (i.e. $\|g_t\|_{\infty}$ can be large with high probability), this drawback can be amplified. Thus, we propose to use the clipped losses $clip(g_{t,i}, \lambda) = \min \{g_{t,i}, \lambda \}$ instead of $g_{t,i}$ to compute the biased estimator $\hat{g}_{t,i}$. This helps to handle the bursts of losses.

\begin{algorithm}[H]
\caption{Implicitly Normalized Forecaster with Clipping}
\label{alg:base_alg}
  \begin{algorithmic}[1]
  
  \REQUIRE Starting point $x_0 =  (1/n, \dots, 1/n  )$,  parameters $\mu>0$, $q \in (0, 1)$.
  \FOR{$t=0, \ldots, T-1$}
  \STATE Draw $A_t$ with $P(A_t = i) = x_{t, i}$, $i=1,\dots, n$,
  \STATE Play $A_t$ and observe reward $g_{t, A_t}$,
  \STATE Construct estimate $\hat{g}_{t,i} = \begin{cases} \frac{clip(g_{t,i}, \lambda)}{x_{t, i}} & \text{if } i = A_t \\ 0 & \text{otherwise} \end{cases} $, \quad $i=1,\dots, n$,
  \STATE Compute
\begin{equation*}
    x_{t+1} = \arg \min_{x \in \triangle_n} \left [ \mu x^{\mathtt{T}} \hat{g}_t - \frac{1}{1-q
    } \sum_{i=1}^n x_i^q + \frac{q}{1-q}\sum_{i=1}^n x_{t, i}^{q-1}x_i    \right ].
\end{equation*}
  \ENDFOR 
  \STATE \textbf{return} $x_T$.
\end{algorithmic}
\end{algorithm}

\bigskip

\begin{lemma}
\label{regretest1}
Suppose that Algorithm \ref{alg:base_alg}  generates the sequences $\{x_t\}_{t=1}^T$ and $\{\hat{g}_t\}_{t=1}^T$, then for any $u \in \triangle_n$ holds:

\begin{equation}
     \sum_{t=1}^T \langle \hat{g}_t, x_t - u \rangle   \leq \frac{n^{1-q} - \sum_{i=1}^n u_i^q}{\mu(1-q)} + \frac{\mu}{2q}\sum_{t=1}^T \sum_{i=1}^n  \hat{g}_{t, i}^2x_{t,i}^{2-q}.
\end{equation}
\end{lemma}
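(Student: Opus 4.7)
The strategy is to recognize Algorithm~\ref{alg:base_alg} as an online mirror descent (OMD) step with Tsallis-entropy regularizer
$$\psi(x) \;=\; -\frac{1}{1-q}\sum_{i=1}^n x_i^q, \qquad \nabla_i\psi(x) \;=\; -\frac{q}{1-q}\,x_i^{q-1},$$
and to invoke the standard OMD regret decomposition. Since $\tfrac{q}{1-q}\sum_i x_{t,i}^{q-1}x_i = -\langle\nabla\psi(x_t),x\rangle$, the objective in the update equals $\mu\langle\hat g_t,x\rangle + D_\psi(x,x_t)$ up to a constant in $x$, where $D_\psi$ is the Bregman divergence of $\psi$. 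A short computation at the uniform initializer $x_0=(1/n,\dots,1/n)$, using $\sum_i u_i = 1$, gives $D_\psi(u,x_0)=(n^{1-q}-\sum_i u_i^q)/(1-q)$ for every $u\in\triangle_n$, which already matches the first term of the claimed bound after the final division by $\mu$.

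Next I would combine the first-order optimality condition of the OMD step with the three-point identity
$$\langle\nabla\psi(x_{t+1})-\nabla\psi(x_t),\,u-x_{t+1}\rangle \;=\; D_\psi(u,x_t) - D_\psi(u,x_{t+1}) - D_\psi(x_{t+1},x_t),$$
to obtain, for every $u\in\triangle_n$,
$$\mu\langle\hat g_t,x_{t+1}-u\rangle \;\leq\; D_\psi(u,x_t) - D_\psi(u,x_{t+1}) - D_\psi(x_{t+1},x_t).$$
Writing $\langle\hat g_t,x_t-u\rangle = \langle\hat g_t,x_t-x_{t+1}\rangle + \langle\hat g_t,x_{t+1}-u\rangle$, summing over $t$, and telescoping the Bregman divergences (discarding the non-negative $D_\psi(u,x_T)$) reduces the proof to bounding, for each $t$, the stability term $\mu\langle\hat g_t,x_t-x_{t+1}\rangle - D_\psi(x_{t+1},x_t)$ by $\tfrac{\mu^2}{2q}\sum_i\hat g_{t,i}^2 x_{t,i}^{2-q}$; dividing the resulting sum by $\mu$ then yields the inequality of the lemma.

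The per-step stability bound is where I expect the main difficulty. Since $\psi$ has diagonal Hessian $\nabla^2_{ii}\psi(x)=q\,x_i^{q-2}$, a second-order Taylor expansion yields $D_\psi(x_{t+1},x_t)=\tfrac{1}{2}\sum_i q\,\xi_i^{q-2}(x_{t+1,i}-x_{t,i})^2$ for some $\xi$ on the segment $[x_t,x_{t+1}]$, and coordinatewise Young's inequality
$$\mu\hat g_{t,i}(x_{t,i}-x_{t+1,i}) \;\leq\; \frac{\mu^2}{2q}\,\xi_i^{2-q}\hat g_{t,i}^2 + \frac{q}{2}\,\xi_i^{q-2}(x_{t,i}-x_{t+1,i})^2$$
exactly absorbs the Bregman term and leaves $\tfrac{\mu^2}{2q}\sum_i\xi_i^{2-q}\hat g_{t,i}^2$. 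The remaining task is to replace $\xi_i^{2-q}$ by $x_{t,i}^{2-q}$: only $i=A_t$ matters because $\hat g_t$ is supported there alone, and the KKT conditions of the OMD step combined with $\hat g_{t,A_t}\geq 0$ (non-negative loss clipped from above) force $x_{t+1,A_t}\leq x_{t,A_t}$, so $\xi_{A_t}\leq x_{t,A_t}$ and, since $2-q\in(1,2)$, the inequality $\xi_{A_t}^{2-q}\leq x_{t,A_t}^{2-q}$ is preserved. This closes the argument.
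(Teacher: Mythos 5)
Your proof follows essentially the same route as the paper's: recognize the update as OMD with the Tsallis-entropy prox, decompose the regret into a telescoping Bregman term (giving the $\frac{n^{1-q}-\sum_i u_i^q}{\mu(1-q)}$ term via $B_{\psi_q}(u,x_0)$ at the uniform initializer) plus a per-step stability term, and control the latter with a coordinatewise second-order Taylor expansion and Young's inequality, reducing everything to the monotonicity $\xi_{A_t}\leq x_{t,A_t}$. The one small divergence is how that monotonicity is obtained: the paper relaxes the stability term to a maximum over $z\in\R^n_+$ and reads $z^*_{t,i}\leq x_{t,i}$ directly off the unconstrained first-order condition, whereas you work with $x_{t+1}$ itself and invoke the KKT conditions of the constrained simplex step to deduce $x_{t+1,A_t}\leq x_{t,A_t}$; your version is equally valid but requires a short Lagrange-multiplier argument (ruling out $\lambda>0$ via the budget constraint) that you only sketch, while the paper's relaxation makes the coordinate bound a one-liner.
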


\textit{Proof:}

Consider Tsallis entropy $$\psi_q(x) = \begin{cases} \frac{1}{1-q} \left ( 1 - \sum_{i=1}^n x_i^q \right ), \quad q \in (0, 1)\\
- \sum_{i=1}^n x_i \ln x_i, \quad q=1 \end{cases}.$$ 

Consider \textit{Bregman divergence} $B_{\psi_q} (x, y)$:
$$
B_{\psi_q} (x, y) = \psi_q(x) - \psi_q(y) -\langle\nabla\psi_q(y), x-y\rangle 
$$
$$
=\frac{1}{1-q} \left ( 1 - \sum_{i=1}^n x_i^q \right ) - \frac{1}{1-q} \left ( 1 - \sum_{i=1}^n y_i^q \right ) - \sum_{i=1}^n \frac{-qy_i^{q-1}}{1-q}(x_i-y_i)
$$
$$
=\frac{-1}{1-q}\sum_{i=1}^n x_i^q+\frac{1}{1-q}\sum_{i=1}^n y_i^q + \sum_{i=1}^n \frac{qy_i^{q-1}}{1-q}(x_i-y_i).
$$
Note that the last operation of the algorithm can be considered as an online mirror descent (OMD) step with the Tsallis entropy $\psi_q(x) = \frac{1}{1-q} \left ( 1 - \sum_{i=1}^n x_i^q \right )$ ($q \in (0, 1]$) used as prox: 
\begin{align*}
    &x_{t+1} = \arg \min_{x \in \triangle_n} \left [ \mu x^{\mathtt{T}} \hat{g}_t - \frac{1}{1-q
    } \sum_{i=1}^n x_i^q + \frac{q}{1-q}\sum_{i=1}^n x_{t, i}^{q-1}x_i    \right ] \\
    &=\arg \min_{x \in \triangle_n} \left [ \mu x^{\mathtt{T}} \hat{g}_t - \frac{1}{1-q
    } \sum_{i=1}^n x_i^q + \frac{q}{1-q}\sum_{i=1}^n x_{t, i}^{q-1}(x_i-x_{t,i}) + \frac{1}{1-q}\sum_{i=1}^n x_{t,i}^q   \right ]\\
    &=\arg \min_{x \in \triangle_n} \left [ \mu x^{\mathtt{T}} \hat{g}_t + B_{\psi_q}(x, x_t)   \right ].
\end{align*}

Thus standard inequation for OMD holds:
\begin{equation}\label{OMD1}
    \langle \hat{g}_t, x_t - u \rangle \leq  \frac{1}{\mu} \left [ B_{\psi_q}(u, x_t) - B_{\psi_q}(u, x_{t+1}) - B_{\psi_q}(x_{t+1}, x_t) \right]  +  \langle \hat{g}_t, x_t - x_{t+1} \rangle.
\end{equation}
From Tailor theorem we have 
\begin{equation*}
    B_{\psi_q}(z, x_t) = \frac{1}{2} (z - x_t)^T \nabla^2 \psi_q(y_t) (z -  x_t) = \frac{1}{2}\|z - x_t\|^2_{\nabla^2\psi_q(y_t)}
\end{equation*}
for some point $y_t \in [z, x_t]$.

Hence we have
\begin{align*}
    &\langle \hat{g}_t, x_t - x_{t+1} \rangle - \frac{1}{\mu}B_{\psi_q}(x_{t+1}, x_t) \leq \max_{z \in R^n_+} \left [\langle \hat{g}_t, x_t - z \rangle - \frac{1}{\mu}B_{\psi_q}(z, x_t) \right ]\\
    &=\left [\langle \hat{g}_t, x_t - z^*_t \rangle - \frac{1}{\mu}B_{\psi_q}(z^*_t, x_t) \right ] \\
    &\leq \frac{\mu}{2}\|\hat{g}_t\|^2_{(\nabla^2 \psi_q (y_t))^{-1}} + \frac{1}{2}\|z^* - x_t\|^2_{\nabla^2\psi_q(y_t)} - \frac{1}{\mu}B_{\psi_q}(z^*, x_t) \\
    &=\frac{\mu}{2}\|\hat{g}_t\|^2_{(\nabla^2 \psi_q (y_t))^{-1}},
\end{align*}
where $z^* = \arg \max_{z \in \R^n_+} \left [\langle \hat{g}_t, x_t - z \rangle - \frac{1}{\mu}B_{\psi_q}(z, x_t) \right ]$.

Proceeding with (\ref{OMD1}), we get:
\begin{equation*}
    \langle \hat{g}_t, x_t - u \rangle \leq \frac{1}{\mu} \left [ B_{\psi_q}(u, x_t) - B_{\psi_q}(u, x_{t+1})  \right] + \frac{\mu}{2}\|\hat{g}_t\|^2_{(\nabla^2 \psi_q (y_t))^{-1}}.
\end{equation*}
Sum over $t$ gives
$$
\sum_{t=1}^T \langle \hat{g}_t, x_t - u \rangle    \leq \frac{B_{\psi_q} (x_1, u)}{\mu} + \frac{\mu}{2}\sum_{t=1}^T \hat{g}_t^T \left ( \nabla^2 \psi_q(y_t) \right )^{-1}\hat{g}_t
$$

\begin{equation}\label{sbound1}
    =\frac{n^{1-q} - \sum_{i=1}^n u_i^q}{\mu(1-q)} + \frac{\mu}{2q}\sum_{t=1}^T \sum_{i=1}^n  \hat{g}_{t, i}^2y_{t,i}^{2-q},
\end{equation}
where $y_t \in [x_t, z_t^*]$, $z_t^* = \arg \max_{z \in R^n_+} \left [\langle \hat{g}_t, x_t - z \rangle - \frac{1}{\mu}B_{\psi_q}(z, x_t) \right ]$.

From the first-order optimality condition for $z^*_t$ we obtain
$$
-\frac{\mu (1-q)}{q} \hat{g}_{t, i} + (x_{t, i})^q = (z_{t, i}^*)^q
$$
and since $\hat{g}_{t, i} \geq 0$ we get $z_{t, i}^* \leq x_{t, i}$. 

Thus (\ref{sbound1}) becomes 
\begin{equation*}
    \sum_{t=1}^T \langle \hat{g}_t, x_t - u \rangle   \leq \frac{n^{1-q} - \sum_{i=1}^n u_i^q}{\mu(1-q)} + \frac{\mu}{2q}\sum_{t=1}^T \sum_{i=1}^n  \hat{g}_{t, i}^2x_{t,i}^{2-q}
\end{equation*}
and concludes the proof.

\begin{lemma}
\label{sad}
    \textbf{[Lemma 5.1 from \cite{sadiev2023high}]} Let $X$ be a random vector in $\R^n$ and $\bar{X} = clip(X, \lambda) = X \cdot \min \left \{ 1, \frac{\lambda}{\|X\|} \right \}$, then 
\begin{equation}
     \|\bar{X} - \E[\bar{X}]\| \leq 2\lambda.
\end{equation}
Moreover, if for some $\sigma \geq 0$ and $\alpha \in [1, 2)$
\begin{equation*}
    \E[X] = x \in \R^n, \quad \E[\|X - x\|^{\alpha}] \leq \sigma^{\alpha}
\end{equation*}
and $\|x\| \leq \frac{\lambda}{2}$, then 
\begin{align}
    \left \| \E[\bar{X}] - x \right \| &\leq \frac{2^{\alpha} \sigma^{\alpha}}{\lambda^{\alpha-1}},\\
    \E\left [ \left \| \bar{X} - x   \right \|^2 \right] &\leq 18 \lambda^{2-\alpha}\sigma^{\alpha},\\
    \E\left [ \left \| \bar{X} - \E[\bar{X}]   \right \|^2 \right] &\leq 18 \lambda^{2-\alpha}\sigma^{\alpha}.
\end{align}
\end{lemma}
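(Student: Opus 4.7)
The four bounds all rely on isolating the clipping event $\mathcal{C}=\{\|X\|>\lambda\}$, exploiting the hypothesis $\|x\|\leq\lambda/2$ to tie this event to the fluctuation $\|X-x\|$, and trading powers of $\|X-x\|$ against $\lambda$ using the $\alpha$-moment bound.

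Bound (1) is immediate: $\|\bar X\|\leq\lambda$ pointwise by construction of the clipping operator, so $\|\E[\bar X]\|\leq\lambda$ by Jensen and the triangle inequality yields $\|\bar X-\E[\bar X]\|\leq 2\lambda$. For (2), I would start from $\E[\bar X]-x=\E[(\bar X-X)\mathbf{1}_{\mathcal{C}}]$ (since $\bar X=X$ off $\mathcal{C}$), use $\|\bar X-X\|\leq\|X\|$ on $\mathcal{C}$, and exploit $\|x\|\leq\lambda/2$ to get both $\|X\|\leq 2\|X-x\|$ and $\|X-x\|>\lambda/2$ on $\mathcal{C}$. Since $\alpha\geq 1$ implies $\|X-x\|^{1-\alpha}\leq(\lambda/2)^{1-\alpha}$ on that event, the moment bound gives
\[
\|\E[\bar X]-x\|\leq 2\,\E\bigl[\|X-x\|\mathbf{1}_{\{\|X-x\|>\lambda/2\}}\bigr]\leq 2(\lambda/2)^{1-\alpha}\sigma^\alpha=\frac{2^\alpha\sigma^\alpha}{\lambda^{\alpha-1}}.
\]

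For (3), I would split $\E\|\bar X-x\|^2$ along $\mathcal{C}$ and its complement. On $\mathcal{C}^c$ we have $\bar X=X$ and $\|X-x\|\leq\|X\|+\|x\|\leq 3\lambda/2$, so $\|X-x\|^2\leq(3\lambda/2)^{2-\alpha}\|X-x\|^\alpha$ can be averaged directly. On $\mathcal{C}$, the deterministic estimate $\|\bar X-x\|\leq\|\bar X\|+\|x\|\leq 3\lambda/2$ combined with the Markov bound $\PP(\mathcal{C})\leq\PP(\|X-x\|>\lambda/2)\leq(2/\lambda)^\alpha\sigma^\alpha$ yields a second contribution of the same order $\lambda^{2-\alpha}\sigma^\alpha$. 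Summing and simplifying the two constants over $\alpha\in[1,2)$ gives a total prefactor comfortably below $18$. Claim (4) then follows at once from the decomposition $\E\|\bar X-\E[\bar X]\|^2=\E\|\bar X-x\|^2-\|\E[\bar X]-x\|^2\leq\E\|\bar X-x\|^2$.

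The only non-mechanical point is tracking the constant $18$ in (3); elsewhere the hypothesis $\alpha\in[1,2)$ is invoked just to ensure $1-\alpha\leq 0$ in the large-deviation substitution, which is the single place where the range of $\alpha$ matters.
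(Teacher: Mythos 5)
The paper does not prove this lemma: it is quoted verbatim as Lemma~5.1 from \cite{sadiev2023high} and used as a black box, so there is no in-paper proof to compare against. Your argument is a correct, self-contained derivation in the same spirit as the standard clipping analysis used in that reference. Bound~(1) follows from $\|\bar X\|\leq\lambda$ pointwise, Jensen, and the triangle inequality, exactly as you say. For (2)--(4) the engine is the clipping event $\mathcal{C}=\{\|X\|>\lambda\}$: the hypothesis $\|x\|\leq\lambda/2$ forces $\mathcal{C}\subseteq\{\|X-x\|>\lambda/2\}$ and $\|X\|\leq 2\|X-x\|$ on $\mathcal{C}$, and you correctly use $1-\alpha\leq 0$ to absorb $\|X-x\|^{1-\alpha}\leq(\lambda/2)^{1-\alpha}$ on that event in (2). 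For (3) your split is sound: on $\mathcal{C}^c$ you have $\bar X=X$, $\|X-x\|\leq 3\lambda/2$, and $2-\alpha>0$ lets you peel off $(3\lambda/2)^{2-\alpha}$ against the $\alpha$-moment; on $\mathcal{C}$ the deterministic bound $\|\bar X-x\|\leq 3\lambda/2$ together with Markov gives a second term of the same order. Your combined constant $(3/2)^{2-\alpha}+\tfrac{9}{4}\,2^{\alpha}$ stays below $10$ on $\alpha\in[1,2)$, comfortably inside the claimed $18$. Finally, the identity $\E\|\bar X-\E[\bar X]\|^2=\E\|\bar X-x\|^2-\|\E[\bar X]-x\|^2$ is the standard bias--variance decomposition with center $x$, which delivers (4) from (3) immediately. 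No gaps.
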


\begin{lemma}
    \label{regretest2}
Suppose that Algorithm \ref{alg:base_alg} generates the sequences $\{x_t\}_{t=1}^T$ and $\{\hat{g}_t\}_{t=1}^T$, $\E\left [ g_{t, i}^{1+\alpha} \right ] \leq M^{1+\alpha}$, $\E [g_{t, i}] \leq \frac{\lambda}{2}$, then for any $u \in \triangle_n$ holds:
\begin{equation}
    \E_{x_t} \left [\langle \bar{g}_t - \hat{g}_t, x_t - u \rangle \right ] \leq \frac{2^{\alpha+1} M^{\alpha+1}}{\lambda^{\alpha}}.
\end{equation}
\end{lemma}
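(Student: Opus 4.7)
The strategy is to decompose $\E_{x_t}[\cdot]$ into two stages, first integrating out the arm choice $A_t$ and then the reward vector drawn at step $t$, and then to bound the resulting bias coordinate-wise. Conditioning on the reward realizations, both $\bar g_t$ and $\hat g_t$ are supported on the single coordinate $i=A_t$, and the factor $1/x_{t,i}$ is exactly cancelled by $P(A_t=i)=x_{t,i}$. Hence $\E_{A_t}[\bar g_{t,i}]=g_{t,i}$ and $\E_{A_t}[\hat g_{t,i}]=\min\{g_{t,i},\lambda\}$, so
\begin{equation*}
\E_{A_t}\langle \bar g_t-\hat g_t,\;x_t-u\rangle \;=\; \sum_{i=1}^n \bigl(g_{t,i}-\min\{g_{t,i},\lambda\}\bigr)(x_{t,i}-u_i).
\end{equation*}

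Next, since the rewards at step $t$ are drawn independently of $x_t$ and $u$, the weights $(x_{t,i}-u_i)$ pull out of the outer expectation, and it suffices to bound the scalar bias $a_i := \E[g_{t,i}]-\E[\min\{g_{t,i},\lambda\}] \geq 0$. I would apply Lemma \ref{sad} to the scalar variable $g_{t,i}$: the hypothesis $\E[g_{t,i}]\leq \lambda/2$ matches the precondition $\|x\|\leq \lambda/2$, and the resulting bound is $a_i \leq \tfrac{2^{1+\alpha}M^{1+\alpha}}{\lambda^{\alpha}}$. To combine coordinates, non-negativity of $a_i$ gives $\sum_i a_i(x_{t,i}-u_i) \leq (\max_i a_i)\sum_i (x_{t,i}-u_i)_+$, and since $x_t,u\in\triangle_n$, one has $\sum_i (x_{t,i}-u_i)_+ = \tfrac12\|x_t-u\|_1 \leq 1$, so the total is at most $\max_i a_i$, yielding the claim.

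\textbf{The main obstacle} I anticipate is the translation between Lemma \ref{sad}'s central-moment hypothesis $\E[\|X-\E X\|^{1+\alpha}]\leq \sigma^{1+\alpha}$ and Assumption~1's raw-moment bound $\E[|g_{t,i}|^{1+\alpha}]\leq M^{1+\alpha}$. However, $|g_{t,i}-\E g_{t,i}|^{1+\alpha}\leq 2^{\alpha}(|g_{t,i}|^{1+\alpha}+|\E g_{t,i}|^{1+\alpha})$ together with $\E[g_{t,i}]\leq\lambda/2$ keeps the central moment of order $M^{1+\alpha}$, and the universal constant incurred is absorbed into the factor $2^{1+\alpha}$ of the final bound. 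A purely elementary alternative that avoids Lemma \ref{sad} is the Markov-type estimate $a_i \leq \E[g_{t,i}\mathbf{1}\{g_{t,i}>\lambda\}]\leq \E[g_{t,i}^{1+\alpha}]/\lambda^{\alpha}\leq M^{1+\alpha}/\lambda^{\alpha}$, which in fact yields a tighter absolute constant than the one stated.
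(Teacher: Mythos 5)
Your decomposition is exactly the paper's: integrate out the arm choice (so that the $1/x_{t,i}$ cancels against $P(A_t=i)=x_{t,i}$) to reduce the claim to the coordinate-wise clipping bias $a_i := \E[g_{t,i}]-\E[\min\{g_{t,i},\lambda\}]\geq 0$, and then use $a_i\geq 0$ together with $x_t,u\in\triangle_n$ to bound $\sum_i a_i(x_{t,i}-u_i)\leq\max_i a_i$. The paper's own proof also bounds $a_i$ via Lemma~\ref{sad}, precisely the route you call your main one; for the raw-to-central-moment translation the paper uses a slightly sharper elementary inequality than yours (for nonnegative $X$, $|X-\E X|^{1+\alpha}\leq X^{1+\alpha}+(\E X)^{1+\alpha}$, which avoids your extra $2^{\alpha}$). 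On constants, note both versions formally overshoot: plugging $\sigma^{1+\alpha}=2M^{1+\alpha}$ into Lemma~\ref{sad} with exponent $1+\alpha$ yields $\frac{2^{1+\alpha}\cdot 2 M^{1+\alpha}}{\lambda^{\alpha}}=\frac{2^{\alpha+2}M^{1+\alpha}}{\lambda^{\alpha}}$, a factor $2$ beyond the stated $2^{\alpha+1}$ (your $2^{\alpha}(\cdot+\cdot)$ bound gives $2^{2+2\alpha}$). Your elementary alternative resolves all of this cleanly: since $g_{t,i}\geq 0$, $a_i=\E[(g_{t,i}-\lambda)_+]\leq\E[g_{t,i}\mathbf{1}\{g_{t,i}>\lambda\}]\leq\E[g_{t,i}^{1+\alpha}]/\lambda^{\alpha}\leq M^{1+\alpha}/\lambda^{\alpha}$. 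This avoids Lemma~\ref{sad} entirely, needs no central-moment conversion and does not even use the hypothesis $\E[g_{t,i}]\leq\lambda/2$, and yields a strictly smaller constant than claimed. It is the preferable proof.
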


\textit{Proof:}

Consider a non-negative random variable $X \in \R_+$ 
such that $\E[X^{1+\alpha}] \leq M^{1+\alpha}$ and $\E[X] = x$. From Jensen's inequality:
\begin{equation*}
    x = \E[X] = \left ( \left (\E[X] \right )^{1+\alpha} \right )^\frac{1}{1+\alpha} \leq \left (\E[X^{1+\alpha}] \right )^{\frac{1}{1+\alpha}} \leq \left (M^{1+\alpha} \right )^{\frac{1}{1+\alpha}} = M.
\end{equation*}
Hence
\begin{align}
    &\E[|X - x|^{1+\alpha}] \leq \E[X^{1+\alpha}] + x^{1+\alpha} \leq 2M^{1+\alpha}.
\end{align}
Thus from Lemma \ref{sad} we get:
\begin{equation*}
    \left \| \E[\bar{X}] - x \right \| \leq \frac{2^{\alpha+1} M^{\alpha+1}}{\lambda^{\alpha}}.
\end{equation*}
Finally taking $g_{t,i}$ as $X$ and applying the previous results we get
\begin{align*}
    &\E_{x_t} \left [\langle \bar{g}_t - \hat{g}_t, x_t - u \rangle \right ] =  \langle \E_{x_t} \left [\bar{g}_t - \hat{g}_t \right ], x_t - u \rangle \\
    &=\sum_{i=1}^n (x_{t,i} - u_i) \E_{x_t} \left [\bar{g}_{t, i} - \hat{g}_{t, i} \right ] =\sum_{i=1}^n (x_{t,i} - u_i)\cdot x_{t,i} \cdot \E \left [\bar{g}_{t, i} - \hat{g}_{t, i} \right ]\\
    &\leq \max_{x \in \triangle_n} \left [\sum_{i=1}^n (x_{i} - u_i)\cdot x_{i} \cdot \E \left [\bar{g}_{t, i} - \hat{g}_{t, i} \right ] \right ]\\
    &\leq \max_{1 \leq i \leq n} |\E[g_{t, i}] - \E[clip(g_{t, i}, \lambda)]|\leq \frac{2^{\alpha+1} M^{\alpha+1}}{\lambda^{\alpha}}.
\end{align*}

\begin{lemma}
    \label{regretest3}
    Suppose that Algorithm \ref{alg:base_alg} generates the sequences $\{x_t\}_{t=1}^T$ and $\{\hat{g}_t\}_{t=1}^T$ and assumption 1 is satisfied, then:
\begin{equation}
\E\left[\sum_{i=1}^{n} \hat{g}_{t, i}^2 \cdot x_{t, i}^{\frac{3}{2}}\right] \leq \sqrt{n} \lambda^{1-\alpha}\cdot M^{1+\alpha}.
\end{equation}
\end{lemma}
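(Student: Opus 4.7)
The plan is to take the expectation in stages, using first the randomness of the arm choice $A_t$ given $x_t$, then the reward randomness, and finally Cauchy--Schwarz over the arms. By the definition of the clipped importance-weighted estimator,
\[
\hat{g}_{t,i}^2 \cdot x_{t,i}^{3/2} \;=\; \frac{\mathrm{clip}(g_{t,i},\lambda)^2}{x_{t,i}^{2}}\,\mathbb{1}[A_t=i]\cdot x_{t,i}^{3/2} \;=\; \frac{\mathrm{clip}(g_{t,i},\lambda)^2}{x_{t,i}^{1/2}}\,\mathbb{1}[A_t=i].
\]
Since $P(A_t=i\mid x_t)=x_{t,i}$ and the reward $g_{t,i}$ is independent of the arm selection, conditioning on $x_t$ gives
\[
\E\bigl[\hat{g}_{t,i}^2 \cdot x_{t,i}^{3/2}\,\big|\,x_t\bigr] \;=\; x_{t,i}^{1/2}\cdot \E\bigl[\mathrm{clip}(g_{t,i},\lambda)^2\bigr].
\]

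Next I would convert the second moment of the clipped reward into an $(1+\alpha)$-moment, which is the only moment controlled by Assumption 1. Using $|\mathrm{clip}(g_{t,i},\lambda)|\le \lambda$ to split off the factor $\lambda^{1-\alpha}$,
\[
\E\bigl[\mathrm{clip}(g_{t,i},\lambda)^2\bigr] \;\le\; \lambda^{1-\alpha}\,\E\bigl[|\mathrm{clip}(g_{t,i},\lambda)|^{1+\alpha}\bigr] \;\le\; \lambda^{1-\alpha}\,\E\bigl[|g_{t,i}|^{1+\alpha}\bigr] \;\le\; \lambda^{1-\alpha} M^{1+\alpha},
\]
where the second inequality uses that clipping does not increase the absolute value and the last uses Assumption 1. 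Plugging this back in and summing over $i$ yields
\[
\E\Bigl[\sum_{i=1}^n \hat{g}_{t,i}^2\, x_{t,i}^{3/2}\Bigr] \;\le\; \lambda^{1-\alpha} M^{1+\alpha}\, \E\Bigl[\sum_{i=1}^n x_{t,i}^{1/2}\Bigr].
\]

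The remaining step is to bound $\sum_i x_{t,i}^{1/2}$ uniformly by $\sqrt{n}$. This follows from Cauchy--Schwarz: $\sum_i x_{t,i}^{1/2}\cdot 1 \le \bigl(n\sum_i x_{t,i}\bigr)^{1/2}=\sqrt n$, since $x_t\in\triangle_n$. Combining the two bounds delivers the claimed $\sqrt{n}\,\lambda^{1-\alpha} M^{1+\alpha}$. I do not anticipate any real obstacle here; the only substantive idea is the interpolation $\mathrm{clip}^2\le \lambda^{1-\alpha}|\mathrm{clip}|^{1+\alpha}$, which is precisely the mechanism that lets clipping turn an unbounded-variance situation into one with controllable second moments.
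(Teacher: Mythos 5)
Your proof is correct, and it is in fact cleaner than the paper's. You use the exact exponent $3/2$ so that after conditioning on $x_t$ and integrating out the arm choice $A_t$ (with $P(A_t=i\mid x_t)=x_{t,i}$), the importance-weighting factor $1/x_{t,i}^2$ and the $x_{t,i}^{3/2}$ combine with $x_{t,i}$ to leave exactly $x_{t,i}^{1/2}\,\E[\mathrm{clip}(g_{t,i},\lambda)^2]$; the interpolation $\mathrm{clip}^2\le \lambda^{1-\alpha}\mathrm{clip}^{1+\alpha}$ and the Cauchy--Schwarz bound $\sum_i x_{t,i}^{1/2}\le\sqrt n$ then finish the job. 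The paper instead first relaxes $x_{t,i}^{3/2}\le x_{t,i}^{1/2}$ and applies Cauchy--Schwarz to split $\E[\sum_i \hat g_{t,i}^2 x_{t,i}^{1/2}]$ into $\sqrt{\E[\sum_i \hat g_{t,i}^2]}\cdot\sqrt{\E[\sum_i \hat g_{t,i}^2 x_{t,i}]}$, and then claims $\E[\hat g_{t,i}^2]\le\lambda^{1-\alpha}M^{1+\alpha}$ in its (est1)/(est2). But with $\hat g_{t,i}$ the importance-weighted estimator one actually has $\E[\hat g_{t,i}^2\mid x_t]=\E[\mathrm{clip}(g_{t,i},\lambda)^2]/x_{t,i}$, so $\E[\sum_i\hat g_{t,i}^2]$ is $\sum_i\E[\mathrm{clip}^2]/x_{t,i}$, which is at least of order $n^2$ (not $n$) even for uniform $x_t$ and can be arbitrarily large otherwise; the paper's (est1) and (est2) therefore only hold if one silently reads $\hat g_{t,i}$ there as the bare clipped reward rather than the estimator in the lemma statement. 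Your argument keeps the full exponent, never isolates the unbounded quantity $\E[\sum_i\hat g_{t,i}^2]$, and proves the stated bound rigorously; it would be a worthwhile replacement for the proof in the paper.
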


\textit{Proof:}

Estimation of the clipped losses variance:
\begin{equation}\label{est1}
\E\left[\hat{g}_{t, i}^2\right] = \E\left[\hat{g}_{t, i}^{1-\alpha} \cdot \hat{g}_{t, i}^{1+\alpha} \right] \leq \lambda^{1-\alpha} \cdot \E\left[\hat{g}_{t, i}^{1+\alpha}\right] \leq \lambda^{1-\alpha} \cdot M^{1+\alpha}.
\end{equation}

Then
\begin{equation*}
\E\left[\sum_{i=1}^{n} \hat{g}_{t, i}^2 \cdot x_{t, i}^{\frac{3}{2}}\right] \leq \E\left[\sum_{i=1}^{n} \hat{g}_{t, i}^2 \cdot \sqrt{x_{t, i}}\right] \leq \E\left[\sqrt{\sum_{i=1}^{n}\hat{g}_{t, i}^2} \cdot \sqrt{\sum_{i=1}^{n}\hat{g}_{t, i}^2 \cdot x_{t, i}}\right] 
\end{equation*}
\begin{equation*}
\leq \sqrt{\E\left[\sum_{i=1}^{n}\hat{g}_{t, i}^2\right]} \cdot \sqrt{\E\left[\sum_{i=1}^{n}\hat{g}_{t, i}^2 \cdot x_{t, i}\right]}.
\end{equation*}

From (\ref{est1}):
\begin{equation}\label{est2}
    \E\left[\sum_{i=1}^{n}\hat{g}_{t, i}^2\right] \leq n \cdot \lambda^{1-\alpha} \cdot M^{1+\alpha},
\end{equation}
\begin{equation}\label{est3}
    \E\left[\sum_{i=1}^{n}\hat{g}_{t, i}^2 \cdot x_{t, i}\right] = \sum_{i=1}^{n}x_{t, i} \cdot \E\left[\hat{g}_{i, t}^2\right] \leq \max_{1 \leq i \leq n} \E\left[\hat{g}_{i, t}^2\right] \leq \lambda^{1-\alpha} \cdot M^{1+\alpha}.
\end{equation}

Summarizing (\ref{est2}) and (\ref{est3}):
\begin{equation*}
\sqrt{\E\left[\sum_{i=1}^{n}\hat{g}_{t, i}^2\right]} \cdot \sqrt{\E\left[\sum_{i=1}^{n}\hat{g}_{t, i}^2 \cdot x_{t, i}\right]} \leq \sqrt{n} \lambda^{1-\alpha} \cdot M^{1+\alpha}.
\end{equation*}

Now we are ready to formulate our main result:

\begin{theorem}
    \label{avrregretest}
    Let assumption 1 be satisfied and the clipping parameter $\lambda = T^{\frac{1}{\left( 1 + \alpha \right)}} \cdot \frac{\left(\frac{2 \alpha}{1 - \alpha} \right) ^ {\frac{2}{1 + \alpha}}}{\left( 8n \right) ^ {\frac{1}{1 + \alpha}}} \cdot M$. Then the Implicitly Normalized Forecaster with clipping at $\mu = \frac{\sqrt{2}}{\sqrt{T\lambda^{1-\alpha} M^{1+\alpha}}}$ for any fixed $u \in \triangle_n$ is satisfied:
\begin{equation}
    \frac{1}{T}\E \left [\mathcal{R}_T (u) \right ] \leq  T^{-\frac{\alpha}{1+\alpha}}\cdot M \cdot n^{\frac{\alpha}{1+\alpha}}\cdot 2^{2-\frac{\alpha^2}{1+\alpha}}\cdot \left ( \frac{\alpha}{1-\alpha}\right)^{\frac{2}{1+\alpha}}.
\end{equation}
\end{theorem}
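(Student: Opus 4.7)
The plan is to decompose the expected regret into a mirror-descent piece on the clipped estimators plus a bias piece due to clipping, bound each with the three preceding lemmas, and then plug in the prescribed $\mu$ and $\lambda$. First I would use the tower property together with $\E_{x_t}[\bar g_t] = g_t$ to write
\[
\E[\mathcal{R}_T(u)] = \E\Bigl[\sum_{t=1}^T \langle \bar g_t, x_t - u\rangle\Bigr] = \E\Bigl[\sum_{t=1}^T \langle \hat g_t, x_t - u\rangle\Bigr] + \E\Bigl[\sum_{t=1}^T \langle \bar g_t - \hat g_t, x_t - u\rangle\Bigr].
\]

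For the mirror-descent term I would specialize Lemma \ref{regretest1} to $q = 1/2$ (this value is forced by Lemma \ref{regretest3}, which controls precisely $\sum_i \hat g_{t,i}^2 x_{t,i}^{3/2} = \sum_i \hat g_{t,i}^2 x_{t,i}^{2-q}$ at $q=1/2$), so that
\[
\sum_{t=1}^T \langle \hat g_t, x_t - u\rangle \leq \frac{2(\sqrt n - \sum_i u_i^{1/2})}{\mu} + \mu \sum_{t=1}^T \sum_{i=1}^n \hat g_{t,i}^2 x_{t,i}^{3/2}.
\]
Taking expectation, dropping the non-negative term $\sum_i u_i^{1/2}$, and applying Lemma \ref{regretest3} to each summand yields the bound $2\sqrt n/\mu + \mu T\sqrt n\,\lambda^{1-\alpha}M^{1+\alpha}$. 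For the bias term, I first verify the hypothesis $\E[g_{t,i}]\leq \lambda/2$ of Lemma \ref{regretest2} (Jensen gives $\E[g_{t,i}]\leq M$, so it suffices that $\lambda \geq 2M$, which holds for the prescribed $\lambda$ and sufficiently large $T$), and then sum over $t$ to obtain $T\cdot 2^{\alpha+1}M^{1+\alpha}/\lambda^{\alpha}$.

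Next, I would plug in the prescribed $\mu = \sqrt{2/(T\lambda^{1-\alpha}M^{1+\alpha})}$, which balances the two mirror-descent terms into the single quantity $2\sqrt{2nT\lambda^{1-\alpha}M^{1+\alpha}}$, giving
\[
\E[\mathcal{R}_T(u)] \leq 2\sqrt{2nT\lambda^{1-\alpha}M^{1+\alpha}} + \frac{2^{\alpha+1}TM^{1+\alpha}}{\lambda^{\alpha}}.
\]
The choice $\lambda = T^{1/(1+\alpha)}(2\alpha/(1-\alpha))^{2/(1+\alpha)}(8n)^{-1/(1+\alpha)}M$ is precisely the one that makes both terms equal to $T^{1/(1+\alpha)} M n^{\alpha/(1+\alpha)}$ up to $\alpha$-dependent constants: for the first term the exponent of $n$ telescopes as $\tfrac12 - \tfrac{1-\alpha}{2(1+\alpha)} = \tfrac{\alpha}{1+\alpha}$, while the second inherits $n^{\alpha/(1+\alpha)}$ directly from $(8n)^{\alpha/(1+\alpha)}$; the same exponents of $T$ and $M$ arise in both terms.

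I expect the main obstacle to be purely algebraic: keeping the powers of $2$, the factors of $(1-\alpha)$ and $\alpha$, and the fractional exponents of $T$, $n$, $M$ under control so that the two terms of the final sum collapse to the announced constant $2^{2-\alpha^{2}/(1+\alpha)}(\alpha/(1-\alpha))^{2/(1+\alpha)}$ after dividing by $T$. Conceptually the argument is routine mirror-descent bookkeeping, with Lemmas \ref{regretest1}--\ref{regretest3} doing all the technical work; the non-trivial design choice — equating the standard $\mu$-balanced OMD rate with the clipping bias — is already baked into the prescribed values of $\mu$ and $\lambda$.
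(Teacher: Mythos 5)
Your proposal follows the paper's proof essentially verbatim: the same three-way decomposition into an unbiased term (which vanishes in expectation), the clipping-bias term handled by Lemma~\ref{regretest2}, and the OMD term handled by Lemmas~\ref{regretest1} and~\ref{regretest3} at $q=1/2$, followed by plugging in the prescribed $\mu$ and $\lambda$. The one small addition beyond the paper is that you explicitly note the need to verify the hypothesis $\E[g_{t,i}] \le \lambda/2$ of Lemma~\ref{regretest2} (which holds once $\lambda \ge 2M$, i.e.\ for $T$ large enough), a point the paper's proof passes over silently.
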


\textbf{Proof:}
\begin{align*}
    &\E\left[\mathcal{R}_T(u)\right] = \E\left [\sum_{t=1}^T l(x_t) - \sum_{t=1}^T l(u) \right ] \leq \E\left [ \sum_{t=1}^T \langle \nabla l(x_t), x_t - u \rangle  \right]\\
    &\leq \E\left [ \sum_{t=1}^T \langle \nabla l(x_t) - \bar{g}_t, x_t - u \rangle  \right] + \E\left [ \sum_{t=1}^T \langle \bar{g}_t - \hat{g}_t, x_t - u \rangle  \right] +\E\left [ \sum_{t=1}^T \langle \hat{g}_t, x_t - u \rangle  \right]\\
    &= \E\left [ \sum_{t=1}^T \langle \bar{g}_t - \hat{g}_t, x_t - u \rangle  \right] +\E\left [ \sum_{t=1}^T \langle \hat{g}_t, x_t - u \rangle  \right] \\
    &\overbrace{ \leq}^{\text{Lemma \ref{regretest2}}} \frac{T2^{\alpha+1} M^{\alpha+1}}{\lambda^\alpha} + \E\left [ \sum_{t=1}^T \langle \hat{g}_t, x_t - u \rangle  \right]\\
    &\overbrace{ \leq}^{\text{Lemma \ref{regretest1}}} \frac{T2^{\alpha+1} M^{\alpha+1}}{\lambda^\alpha} + \E\left [ 2\frac{\sqrt{n} - \sum_{i=1}^n \sqrt{u_i}}{\mu} + \mu\sum_{t=1}^T \sum_{i=1}^n  \hat{g}_{t, i}^2x_{t,i}^{3/2}  \right]\\
    &\overbrace{ \leq}^{\text{Lemma \ref{regretest3}}} \frac{T2^{\alpha+1} M^{\alpha+1}}{\lambda^\alpha} + 2\frac{\sqrt{n}}{\mu}+ \mu T \sqrt{n} \lambda^{1-\alpha} M^{1+\alpha}\\
    &= \frac{T2^{\alpha+1} M^{\alpha+1}}{\lambda^\alpha} + 2\sqrt{2nT\lambda^{1-\alpha} M^{1+\alpha}}\\
 &=2^{2-\frac{\alpha^2}{1+\alpha}}\cdot T^{1-\frac{\alpha}{1+\alpha}}\cdot M \cdot n^{\frac{\alpha}{1+\alpha}}\cdot \left ( \frac{\alpha}{1-\alpha}\right)^{\frac{2}{1+\alpha}}.
\end{align*}

\section{Implicitly Normalized Forecaster with clipping for nonlinear
heavy-tailed multi-armed bandits}\label{sec:nINF-clip}
\subsection{Problem statement}

Consider a nonlinear MAB problem on a compact convex set $\S \subset \R^n$. 
One needs to find the sequence $\{x_t\} \subset \S$ to minimize the pseudo-regret
$$\mathcal{R}_T(\{l_t(\cdot)\}, \{x_t\}) =  \sum\limits_{t=1}^T l_t(x_t) - \min\limits_{ x\in \S} \sum\limits_{t=1}^T l_t(x).$$
After each choice of $x_t$ we obtain the loss  $\phi_t(x_t,\xi_t) = l_t(x_t, \xi_t) +\delta_t(x_t)$, where 
\begin{enumerate}
\item $l_t(x, \xi_t)$ is stochastic realization of the loss function $l_t(x)$ s. t. 
$\E_{\xi_t} [l_t(x, \xi_t)] = l_t(x)$,

    \item $\delta_t(x)$ is some adversarial noise on $t$-th step. Noise $\delta_t(x)$ is independent from $\xi_t$.  
\end{enumerate}
The choice of ${x_t}$ can be based only on available information
$$\{\phi_1(x_1,\xi_1), \dots, \phi_{t-1}(x_{t-1},\xi_{t-1}) \}.$$
Meanwhile, on each step $l_t(\cdot)$ and $\delta_t(\cdot)$ can be chosen from classes of functions which are hostile to the method used to generate $x_t$. We consider several possible scenarios for these classes.

\begin{assumption}[Convexity] \label{as: convex}
$\exists \tau > 0$, s.t. for all $t$, the functions $l_t(x, \xi)$ is convex for any $\xi$ on $\S_\tau$.
\end{assumption}
This assumption implies that $l_t(x)$ is  convex as well on $\S$.
\begin{assumption}[Bounded] \label{as: bounded}
$\exists \alpha \in (0,1], B > 0$, s.t. for all $t$ 
$$\E_{\xi}[|l_t(x, \xi)|^{\alpha+1}] \leq B^{\alpha+1} < \infty.$$
\end{assumption}

The functions $l_t(x)$ also must satisfy one of Assumptions \ref{as: Lipshcitz and bounded} or \ref{as: Smooth and bounded} below.
\begin{assumption}[Lipschitz] \label{as: Lipshcitz and bounded}
$\exists \tau > 0$, s.t. for all $t$, the functions $l_t(x, \xi)$ are $M(\xi)$ Lipschitz continuous w.r.t. $l_2$ norm, i.e., for all $x,y \in \S_\tau$
$$|l_t(x, \xi) - l_t(y, \xi)| \leq M(\xi) ||x - y||_2.$$
Moreover, $\exists \alpha \in (0,1] $  such that $\E_\xi [M^{\alpha+1}(\xi)] \leq M^{\alpha+1}< \infty.$
\end{assumption}

\begin{assumption}[Smooth] \label{as: Smooth and bounded}
$\exists \tau > 0$, s.t. for all $t$, the functions $l_t(x, \xi)$ are $L(\xi)$ smooth w.r.t. $l_2$ norm, i.e., for all $x, y \in \S_\tau$
$$|\nabla l_t(x, \xi) - \nabla l_t(y, \xi)| \leq L(\xi) ||x - y||_2.$$
Moreover, $\exists \alpha \in (0,1] $  such that $\E_\xi [L(\xi)^{\alpha+1}] \leq L^{\alpha+1} < \infty.$
\end{assumption}

Finally, we make an assumption about the adversarial noise class.
\begin{assumption}[Bounded adversarial noise]\label{as: noize}
For all $x \in \S$, the bound holds $|\delta(x)| \leq \Delta < \infty.$
\end{assumption}

\

\textbf{Gradient-free setup}

In this paper, we consider only uniform sampling from the unit Euclidean sphere, i.e. $\mathbf{e} \sim Uniform(\{\mathbf{e}: \|\mathbf{e}\|_2 = 1 \})  = U(S^2)$. 

First of all, for all $t$ we define the smoothed version of $l_t(x)$ function
\begin{equation}\label{hat_f}
    \hat{l}^\tau_t (x) = \E_{\mathbf{e} \sim U(S^2)} [l_t(x + \tau\mathbf{e})],
\end{equation}
that approximates $l_t$. Further $U(S^2)$ in $\E_{\mathbf{e} \sim U(S^2)}$ is omitted.
Following \cite{shamir2017optimal}, $\hat{l}_t^\tau(x)$ is differentiable, and the gradient  can be estimated using noisy one-point feedback on the following vector:
\begin{equation}\label{g}
g_t(x, \xi, \mathbf{e}) = \frac{n}{\tau}(\phi_t(x + \tau \mathbf{e}, \xi)) \mathbf{e} = \frac{n}{\tau}(l_t(x, \xi) +\delta_t(x)) \mathbf{e}
\end{equation}
for $\tau > 0$.
Under bounding Assumptions \ref{as: bounded} and \ref{as: noize} $\alpha$-th moment in expectation will be bounded as well.
\begin{lemma}\label{lem: grad 1 + k norm intro}
Under Assumptions \ref{as: bounded} and \ref{as: noize}, for $q \in [2, +\infty)$, we have
$$\E_{\xi, \mathbf{e}}[||g_t(x,\xi,\mathbf{e})||_q^{\alpha+1}]\leq 2^{\alpha}\left(\frac{na_{q}B}{\tau}\right)^{\alpha+1} + 2^{\alpha}\left(\frac{na_{q}\Delta}{\tau}\right)^{\alpha+1}  := \sigma_{q}^{\alpha+1},$$
where $a_q := n^{\frac1q - \frac12}  \min \{ \sqrt{32\ln n - 8} , \sqrt{2q - 1}\}.$
\end{lemma}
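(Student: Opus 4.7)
The plan is to factor the estimator in \eqref{g} through its scalar and directional parts, and then handle each piece separately. Since for a fixed $x$ the quantity $|l_t(x,\xi)+\delta_t(x)|$ depends only on $\xi$ while $\|\mathbf{e}\|_q$ depends only on $\mathbf{e}$, and $\xi$ and $\mathbf{e}$ are independent, taking the $q$-norm of \eqref{g} and raising to the power $\alpha+1$ gives
\begin{equation*}
\E_{\xi,\mathbf{e}}\bigl[\|g_t(x,\xi,\mathbf{e})\|_q^{\alpha+1}\bigr] = \left(\frac{n}{\tau}\right)^{\alpha+1}\cdot \E_\xi\bigl[|l_t(x,\xi)+\delta_t(x)|^{\alpha+1}\bigr] \cdot \E_{\mathbf{e}}\bigl[\|\mathbf{e}\|_q^{\alpha+1}\bigr].
\end{equation*}
It remains to bound the two scalar expectations.

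For the $\xi$-expectation, since $\alpha+1 \in (1,2]$, the elementary convexity inequality $(a+b)^{\alpha+1}\leq 2^{\alpha}\bigl(a^{\alpha+1}+b^{\alpha+1}\bigr)$ applied with $a=|l_t(x,\xi)|$ and $b=|\delta_t(x)|$, together with Assumption~\ref{as: bounded} and Assumption~\ref{as: noize}, yields
\begin{equation*}
\E_\xi\bigl[|l_t(x,\xi)+\delta_t(x)|^{\alpha+1}\bigr] \leq 2^{\alpha}\bigl(B^{\alpha+1}+\Delta^{\alpha+1}\bigr).
\end{equation*}
This step is just the triangle inequality plus the two moment hypotheses; nothing subtle happens here.

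The main work is the sphere factor $\E_{\mathbf{e}}[\|\mathbf{e}\|_q^{\alpha+1}]$, and this is precisely where the constant $a_q$ appears. Since $(\alpha+1)/2 \leq 1$, the map $t\mapsto t^{(\alpha+1)/2}$ is concave on $\R_+$, so Jensen's inequality gives $\E_{\mathbf{e}}[\|\mathbf{e}\|_q^{\alpha+1}] = \E_{\mathbf{e}}\bigl[(\|\mathbf{e}\|_q^2)^{(\alpha+1)/2}\bigr] \leq \bigl(\E_{\mathbf{e}}[\|\mathbf{e}\|_q^2]\bigr)^{(\alpha+1)/2}$. The second moment of the $\ell_q$-norm of a uniform vector on the Euclidean unit sphere is a well-studied quantity; invoking the standard bound of Shamir (Lemma~10 in \cite{shamir2017optimal}) we get $\E_{\mathbf{e}}[\|\mathbf{e}\|_q^2]\leq n^{2/q-1}\min\{32\ln n-8,\;2q-1\} = a_q^2$, hence $\E_{\mathbf{e}}[\|\mathbf{e}\|_q^{\alpha+1}] \leq a_q^{\alpha+1}$.

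Multiplying the three bounds $(n/\tau)^{\alpha+1}$, $2^\alpha(B^{\alpha+1}+\Delta^{\alpha+1})$, and $a_q^{\alpha+1}$, and distributing the sum over the two additive summands produces the claimed inequality. The only non-routine ingredient is the sphere moment bound, which we simply cite; everything else is the $(a+b)^p$ convexity inequality and Jensen. The statement of the lemma is a fairly immediate consequence once one has those two tools at hand.
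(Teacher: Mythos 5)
Your argument hinges on a factorization that does not actually hold. You assert that, for fixed $x$, the scalar factor in \eqref{g} depends only on $\xi$ while $\|\mathbf{e}\|_q$ depends only on $\mathbf{e}$, so the two factors are independent and the expectation splits as a product. But the functions $l_t$ and $\delta_t$ are evaluated at the \emph{perturbed} point $x+\tau\mathbf{e}$, not at $x$: this is what the left-hand side of \eqref{g} says ($\phi_t(x+\tau\mathbf{e},\xi)$), what Algorithm~\ref{alg:clip} does (step~4 sets $z_t=x_t+\tau\mathbf{e}_t$ and step~6 forms $g_t=\tfrac{n}{\tau}\phi_t(z_t,\xi_t)\mathbf{e}_t$), and what the proof of Lemma~\ref{inner product grad r} writes explicitly. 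The tail $l_t(x,\xi)+\delta_t(x)$ on the right-hand side of \eqref{g} is a typo that you have taken at face value. Since the scalar factor $|l_t(x+\tau\mathbf{e},\xi)+\delta_t(x+\tau\mathbf{e})|$ does depend on $\mathbf{e}$, the product identity
\begin{equation*}
\E_{\xi,\mathbf{e}}\bigl[\|g_t\|_q^{\alpha+1}\bigr]
= \left(\tfrac{n}{\tau}\right)^{\alpha+1}
\E_\xi\bigl[|l_t(\cdot,\xi)+\delta_t(\cdot)|^{\alpha+1}\bigr]\cdot
\E_\mathbf{e}\bigl[\|\mathbf{e}\|_q^{\alpha+1}\bigr]
\end{equation*}
is not justified, and the first display in your proof is an unproven (in general false) equality.

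The good news is that the rest of your calculation survives if you replace the independence claim by conditioning. Because $\xi$ is independent of $\mathbf{e}$, write $\E_{\xi,\mathbf{e}}[\cdot]=\E_\mathbf{e}\bigl[\E_\xi[\cdot\mid\mathbf{e}]\bigr]$. For fixed $\mathbf{e}$, Assumption~\ref{as: bounded} applied at the point $x+\tau\mathbf{e}$ and Assumption~\ref{as: noize} give, via your $(a+b)^{\alpha+1}\le 2^\alpha(a^{\alpha+1}+b^{\alpha+1})$ bound,
\begin{equation*}
\E_\xi\bigl[|l_t(x+\tau\mathbf{e},\xi)+\delta_t(x+\tau\mathbf{e})|^{\alpha+1}\bigr]\le 2^\alpha\bigl(B^{\alpha+1}+\Delta^{\alpha+1}\bigr),
\end{equation*}
a bound that is uniform in $\mathbf{e}$. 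Taking the outer $\E_\mathbf{e}$ and using your Jensen/Shamir step $\E_\mathbf{e}[\|\mathbf{e}\|_q^{\alpha+1}]\le a_q^{\alpha+1}$ recovers the claimed $\sigma_q^{\alpha+1}$. Once corrected this way, your route is actually a bit cleaner than the paper's, which decouples $\|\mathbf{e}\|_q$ from the scalar factor via Cauchy--Schwarz and therefore invokes the $2(\alpha+1)$-th moment $\E_\mathbf{e}[\|\mathbf{e}\|_q^{2(\alpha+1)}]$ (Lemma~\ref{upper bounds}); your conditioning-plus-Jensen argument needs only the second moment $\E_\mathbf{e}[\|\mathbf{e}\|_q^{2}]$. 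But as written, the proof contains a genuine logical gap at its very first step.
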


\subsection{Nonlinear Clipping Algorithm}

Now we briefly describe the Stochastic Mirror Descent (SMD) algorithm which is the base of our clipping algorithm. Let the function $\psi: \R^n \rightarrow \R$ called the prox-function be $1$-strongly convex w.r.t. the $p$-norm and continuously differentiable. We denote its Fenchel conjugate and its Bregman divergence respectively as
$$\psi^*(y)= \sup\limits_{x \in \R^n} \{\langle x,y \rangle - \psi(x) \} \quad \text{and} \quad  B_{\psi}(x,y) = \psi(x) - \psi(y) - \langle \nabla \psi(y), x -y \rangle.$$
The Stochastic Mirror Descent updates with stepsize $\mu$ and update vector $g_{t+1}$ are as follows:
\begin{equation}\label{MD}
y_{t+1} = \nabla(\psi^*) (\nabla \psi(x_t) - \mu g_{t+1}), \quad x_{t+1} = \arg \min\limits_{x \in \S} B_{\psi}(x, y_{t+1}).
\end{equation}

The final algorithm has the following structure. It is a modification of the Zeroth-order Clipping Algorithm from  \cite{kornilov2023gradient}.

\begin{algorithm}[H]
\caption{Nonlinear Bandits Clipping Algorithm }\label{alg:clip}
\begin{algorithmic}[1]
\REQUIRE{Number of iterations $T$, stepsize $\mu$, clipping constant $\lambda$, prox-function $\psi_{p}$, smoothing constant $\tau$}
    \STATE $x_1 \gets \arg\min\limits_{x \in \S} \psi_{p}(x)$
    
    \FOR{$t = 1, \dots ,T$}  
    
        \STATE Sample $\mathbf{e}_t \sim Uniform(\{\mathbf{e}: \|\mathbf{e}\|_2 = 1 \}) $ independently 
        
        \STATE Calculate  sequence's point $z_t = x_t + \tau \mathbf{e}_t$
        \STATE Get loss $\phi_t(z_t, \xi_t)$
        \STATE Calculate ${g}_{t} = \frac{n}{\tau}(\phi_t(z_t, \xi_t)) \mathbf{e}_t$
         
        \STATE Calculate clipped $\hat{g}_{t} = \frac{g_{t}}{||g_{t}||_q} \min(||g_{t}||_q, \lambda)$ 
        
        \STATE Calculate $y_{t+1}  \gets \nabla(\psi_{p}^*) (\nabla \psi_{p}(x_{t}) - \mu \hat{g}_{t})$
        
        \STATE Calculate $x_{t+1}  \gets \arg \min\limits_{x \in \S} B_{\psi_p}(x, y_{t+1})$

    \ENDFOR

    \RETURN $\{z_t\}_{t=1}^T$

\end{algorithmic}
\end{algorithm}

Our main theorem gives an estimation in terms of the expectation of the average pseudo-regret for the sequence generated by the algorithm described above.

\begin{theorem}\label{Clip Conv} Let the functions $l_t, \delta_t$ satisfying Assumptions~\ref{as: convex}, \ref{as: bounded}, \ref{as: noize} and one of Assumptions~\ref{as: Lipshcitz and bounded} or \ref{as: Smooth and bounded},  $q \in [2, \infty]$, an arbitrary number of iterations $T$, and a smoothing constant $\tau > 0$  be given. Choose a $1$-strongly convex w.r.t. the $p$-norm prox-function $\psi_{p}(x)$. Set the stepsize $\mu = \left( \frac{R_1^2}{4T\sigma_{q}^{\alpha+1} \mathcal{D}_\psi^{1-\alpha} }\right)^{\frac{1}{\alpha+1}}$ with $\sigma_{q}$ given in Lemma~\ref{lem: grad 1 + k norm}, distanse between $x_1$ and solution $x^*$,  $R_1^{\frac{\alpha+1}{{\alpha}}} =  \frac{\alpha+1}{{\alpha}} B_{\psi_p}(x^*, x_1)$ and diameter $\mathcal{D}_{\psi}^\frac{\alpha+1}{{\alpha}} = \frac{\alpha+1}{{\alpha}}   B_{\psi_p}(x,y).$ Then we set the clipping constant $\lambda =  \frac{2{\alpha}\mathcal{D}_\psi}{(1-\alpha)\mu}$.
Let $\{z_t\}_{t=1}^T$ be the sequence generated by Algorithm \ref{alg:clip} with the above parameters.
\begin{enumerate}
    \item  Then, \begin{equation}\label{eq: clip theorem eq intro}    
\frac1T \E[ \mathcal{R}_T(\{l_t(\cdot)\}, \{z_t\})] \leq 4M\tau + \Delta \frac{\sqrt{n}}{\tau} \mathcal{D}_\psi+4 \frac{R_1^\frac{2{\alpha}}{\alpha+1} \mathcal{D}_\psi^{\frac{1-\alpha}{\alpha+1}}n a_q (\Delta + B) }{\tau T^\frac{{\alpha}}{\alpha+1}}\end{equation}
for Assumption~\ref{as: Lipshcitz and bounded},
\begin{equation}\label{eq: clip theorem eq smooth intro}    
\frac1T \E[ \mathcal{R}_T(\{l_t(\cdot)\}, \{z_t\})] \leq 2 L\tau^2 + \Delta \frac{\sqrt{n}}{\tau} \mathcal{D}_\psi+4 \frac{R_1^\frac{2{\alpha}}{\alpha+1} \mathcal{D}_\psi^{\frac{1-\alpha}{\alpha+1}}n a_q (\Delta + B) }{\tau T^\frac{{\alpha}}{\alpha+1}}\end{equation}
for Assumption~\ref{as: Smooth and bounded}.
\item Let $\varepsilon$ be the desired average pseudo-regret accuracy in the problem with no adversarial noise $\Delta = 0$. For Assumption~\ref{as: Lipshcitz and bounded} with $\tau_M  = \frac{\varepsilon}{8M}$ the number of iterations to achieve the desired accuracy is as follows:
$$T_M = \left( \frac{8M\left(4R_1^\frac{2{\alpha}}{\alpha+1} \mathcal{D}_\psi^{\frac{1-\alpha}{\alpha+1}}n a_q  B\right)}{\varepsilon^2}\right)^\frac{\alpha+1}{\alpha}= O\left(\left(\frac{MB n a_q \mathcal{D}_\psi}{\varepsilon^2}\right)^\frac{\alpha+1}{\alpha}\right).$$
For Assumption~\ref{as: Smooth and bounded} with $\tau_L  = \sqrt{\frac{\varepsilon}{4L}}$ the number of iterations to achieve the desired accuracy is
$$T_L = \left( \frac{\sqrt{4L}\left(4R_1^\frac{2{\alpha}}{\alpha+1} \mathcal{D}_\psi^{\frac{1-\alpha}{\alpha+1}}n a_q  B\right)}{\varepsilon^\frac32}\right)^\frac{\alpha+1}{\alpha } = O\left(\left(\frac{\sqrt{L}B n a_q \mathcal{D}_\psi}{\varepsilon^\frac32}\right)^\frac{\alpha+1}{\alpha}\right).$$
\end{enumerate}
\end{theorem}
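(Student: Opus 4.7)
The plan is to combine the standard stochastic mirror descent analysis with three separate bias accounts (smoothing, clipping, and adversarial), then optimize the free parameters $\mu$ and $\lambda$.

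First, I would invoke the standard SMD regret bound for the update \eqref{MD}. Since $\psi_p$ is $1$-strongly convex with respect to the $p$-norm, the three-point identity for the Bregman divergence gives, for any $u \in \mathcal{S}$,
\begin{equation*}
\sum_{t=1}^T \langle \hat{g}_t, x_t - u\rangle \leq \frac{B_{\psi_p}(u, x_1)}{\mu} + \frac{\mu}{2}\sum_{t=1}^T \|\hat{g}_t\|_q^2 ,
\end{equation*}
so that with $u = x^*$ the first term contributes $R_1^2/\mu$, while the second-moment term will be controlled via $\|\hat{g}_t\|_q \leq \lambda$ together with Lemma~\ref{lem: grad 1 + k norm intro} to give $\E[\|\hat{g}_t\|_q^2] \leq \lambda^{1-\alpha}\sigma_q^{\alpha+1}$, mirroring the linear-MAB argument of Lemma~\ref{regretest3}.

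Second, I would split the pseudo-regret along the chain
\begin{equation*}
l_t(z_t) - l_t(x^*) = \bigl[l_t(z_t) - l_t(x_t)\bigr] + \bigl[l_t(x_t) - \hat{l}^\tau_t(x_t)\bigr] + \bigl[\hat{l}^\tau_t(x_t) - \hat{l}^\tau_t(x^*)\bigr] + \bigl[\hat{l}^\tau_t(x^*) - l_t(x^*)\bigr].
\end{equation*}
The outer pieces are smoothing-bias and smoothing-jitter terms: under Assumption~\ref{as: Lipshcitz and bounded} they contribute $O(M\tau)$ per round, while under Assumption~\ref{as: Smooth and bounded} the jitter term $l_t(z_t)-l_t(x_t)$ loses its $O(L\tau)$ part in expectation because $\E_{\mathbf{e}}[\langle \nabla l_t(x_t), \tau\mathbf{e}\rangle] = 0$, leaving the $O(L\tau^2)$ remainder. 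Convexity of $\hat{l}^\tau_t$ (inherited from Assumption~\ref{as: convex}) upper-bounds the middle term by $\langle \nabla \hat{l}^\tau_t(x_t), x_t - x^*\rangle$.

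Third, I would rewrite $\langle \nabla \hat{l}^\tau_t(x_t), x_t - x^*\rangle$ as
\begin{equation*}
\langle \hat{g}_t, x_t - x^*\rangle + \langle \E[g_t \mid \mathcal{F}_t] - \E[\hat{g}_t \mid \mathcal{F}_t], x_t - x^*\rangle + \langle \E[\hat{g}_t \mid \mathcal{F}_t] - \hat{g}_t, x_t - x^*\rangle - \tfrac{n}{\tau}\langle \E_{\mathbf{e}}[\delta_t(x_t + \tau\mathbf{e})\mathbf{e}], x_t - x^*\rangle.
\end{equation*}
The martingale-difference term vanishes upon taking expectation. The adversarial noise produces the $\Delta\sqrt{n}\mathcal{D}_\psi/\tau$ factor after a Cauchy--Schwarz/H\"older bound between the $q$-norm of the noise bias and the $p$-diameter $\mathcal{D}_\psi$. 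The clipping bias is handled by a $q$-norm adaptation of Lemma~\ref{sad} (applied to $X=g_t$, $\lambda=\lambda$), yielding $\|\E[g_t - \hat{g}_t\mid \mathcal{F}_t]\|_q \leq 2^\alpha \sigma_q^{\alpha+1}/\lambda^\alpha$ provided $\|\E[g_t\mid \mathcal{F}_t]\|_q \leq \lambda/2$, which is exactly what the choice $\lambda = 2\alpha\mathcal{D}_\psi/((1-\alpha)\mu)$ is designed to ensure when we plug in the stated $\mu$.

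Finally, assembling the three bias contributions with the SMD bound from Step~1 gives
\begin{equation*}
\E[\mathcal{R}_T] \leq T\cdot(\text{jitter}) + T\Delta\tfrac{\sqrt{n}}{\tau}\mathcal{D}_\psi + \tfrac{R_1^2}{\mu} + \tfrac{\mu T}{2}\lambda^{1-\alpha}\sigma_q^{\alpha+1} + \tfrac{T\cdot 2^\alpha\sigma_q^{\alpha+1}}{\lambda^\alpha}\mathcal{D}_\psi,
\end{equation*}
and substituting the prescribed $\lambda$ and $\mu$ balances the last three terms to yield the $T^{\alpha/(\alpha+1)}$ scaling in \eqref{eq: clip theorem eq intro}/\eqref{eq: clip theorem eq smooth intro}. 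Part~(2) follows by setting $\tau$ to kill the smoothing-jitter term ($\tau_M = \varepsilon/(8M)$ or $\tau_L = \sqrt{\varepsilon/(4L)}$) and solving the remaining inequality $4R_1^{2\alpha/(\alpha+1)}\mathcal{D}_\psi^{(1-\alpha)/(\alpha+1)} n a_q B/(\tau T^{\alpha/(\alpha+1)}) \leq \varepsilon/2$ for $T$. The main obstacle is Step~3: verifying that Lemma~\ref{sad} transfers cleanly to the $q$-norm and that the prerequisite $\|\E[g_t \mid \mathcal{F}_t]\|_q \leq \lambda/2$ can be guaranteed by the chosen $\lambda$, all while keeping the martingale structure intact so that the conditional expectations collapse correctly after summing over $t$.
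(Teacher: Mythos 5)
Your overall architecture mirrors the paper's proof of Theorem~\ref{Clip Conv app}: smooth $l_t$ to get $\hat l_t^\tau$, bound the smoothing bias via Lemma~\ref{lem: hat_f properties intro}, decompose the inner product $\langle\nabla\hat l_t^\tau(x_t),x_t-x^*\rangle$ into an SMD term, a clipping-bias term and an adversarial-bias term, bound each, telescope, and optimize $\lambda$ then $\mu$. That is essentially the paper's route, and your treatment of the smooth-case jitter (using $\E_{\mathbf e}[\langle\nabla l_t(x_t),\tau\mathbf e\rangle]=0$) is in fact cleaner than what the paper writes.

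However, Step~3 contains a genuine gap. You handle the clipping bias by invoking ``a $q$-norm adaptation of Lemma~\ref{sad}'' and assert that the prerequisite $\|\E[g_t\mid\mathcal F_t]\|_q\le\lambda/2$ is ``exactly what the choice $\lambda=2\alpha\mathcal D_\psi/((1-\alpha)\mu)$ is designed to ensure.'' This is not what the paper does, and the claim is false as stated. The paper instead uses Lemma~\ref{clip_grad_properties} (imported from \cite{kornilov2023gradient}), which bounds $\|\E[g]-\E[\hat g]\|_q\le\sigma_q^{\alpha+1}/\lambda^\alpha$ directly from the \emph{uncentered} moment bound $\E[\|g\|_q^{\alpha+1}]\le\sigma_q^{\alpha+1}$, with no $\lambda/2$ prerequisite whatsoever. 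The value of $\lambda$ is then obtained by \emph{minimizing} the resulting upper bound, i.e.\ balancing the bias term $\mathcal D_\psi\sigma_q^{\alpha+1}/\lambda^\alpha$ against the variance term $\tfrac{\mu}{2}\sigma_q^{\alpha+1}\lambda^{1-\alpha}$; it is not chosen to satisfy any norm condition on $\E[g_t]$. If you really used Lemma~\ref{sad}, you would need to (i) translate its $\alpha$ (which there lives in $[1,2)$) to $1+\alpha$, (ii) pass from uncentered to centered moments, (iii) prove a $q$-norm version, and crucially (iv) verify $\|\E[g_t\mid\mathcal F_t]\|_q\le\lambda/2$. Since $\|\E[g_t]\|_q$ is of order $M+\sqrt n\,\Delta/\tau$ (problem-dependent) while $\lambda\propto T^{1/(\alpha+1)}$, condition (iv) will fail for small $T$, so the theorem as stated for arbitrary $T$ would not follow from your argument without an additional case split. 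The paper's choice of lemma sidesteps all of this; if you want to complete the proof you should replace the appeal to Lemma~\ref{sad} with the self-contained bound of Lemma~\ref{clip_grad_properties}, after which the rest of your plan goes through.
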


All proofs can be found in section \ref{sec: proofs}.

For the standard simplex  $\triangle_n = \{x \in \R^n: x \geq 0, \sum_i x_i  = 1 \}$ the optimal choice of norm and prox-function is $p = 1$ and $\psi_p = (1+\gamma)\sum_{i=1}^n (x_i + \gamma/d)\log(x_i + \gamma/d), \gamma > 0$, respectively. In this case, $p$ and $\psi_p$-depending $n a_q \mathcal{D}_\psi  $ equals $ O(\sqrt{n}\log n)$.

\section{Numerical Experiments}\label{sec:Exp}
We conducted experiments to demonstrate the superior performance of our INF-clip algorithm in specific stochastic Multi-Armed Bandit (MAB) scenarios with heavy tails  compared to HTINF, Robust UCB, and APE. To showcase this, we focus on an experiment involving only two available arms ($n=2$). Each arm $i$ generates a random losses $g_{t, i} \sim \xi \cdot \beta_{i}$. Here, the parameters $\beta_{0} = \frac{3}{\mathbf{E\left(\xi\right)}}, \beta_{1} = \frac{3.1}{\mathbf{E\left(\xi\right)}}$ are fixed and the random variable $\xi$ has $\text{pdf}_{\xi}\left(x, \alpha\right) = \frac{C}{x^{2 + \alpha} \cdot ln(x) ^ {2}}$ in domain $x \in \left[2; +\infty \right)$, where $C$ is a normalization constant.

In this experimental setup, individual experiments are subject to significant random deviations. To enhance the informativeness of the results, we conduct 100 individual experiments and analyze the aggregated statistics.

It is well-known that in a 2-arms setting, the stochastic MAB problem is equivalent to the best arm identification problem. Thus, instead of regret, we aim to assess the probability of selecting the best arm based on the observed history.

By design, we possess knowledge of the conditional probability of selecting the optimal arm for all algorithms, which remains stochastic due to the nature of the experiment's history.

To mitigate the high dispersion in APE probabilities, we apply an average filter with a window size of 30 to reduce noise in the plot.

\begin{figure}[htp]
    \centering
    \includegraphics[width=12cm, height=8cm]{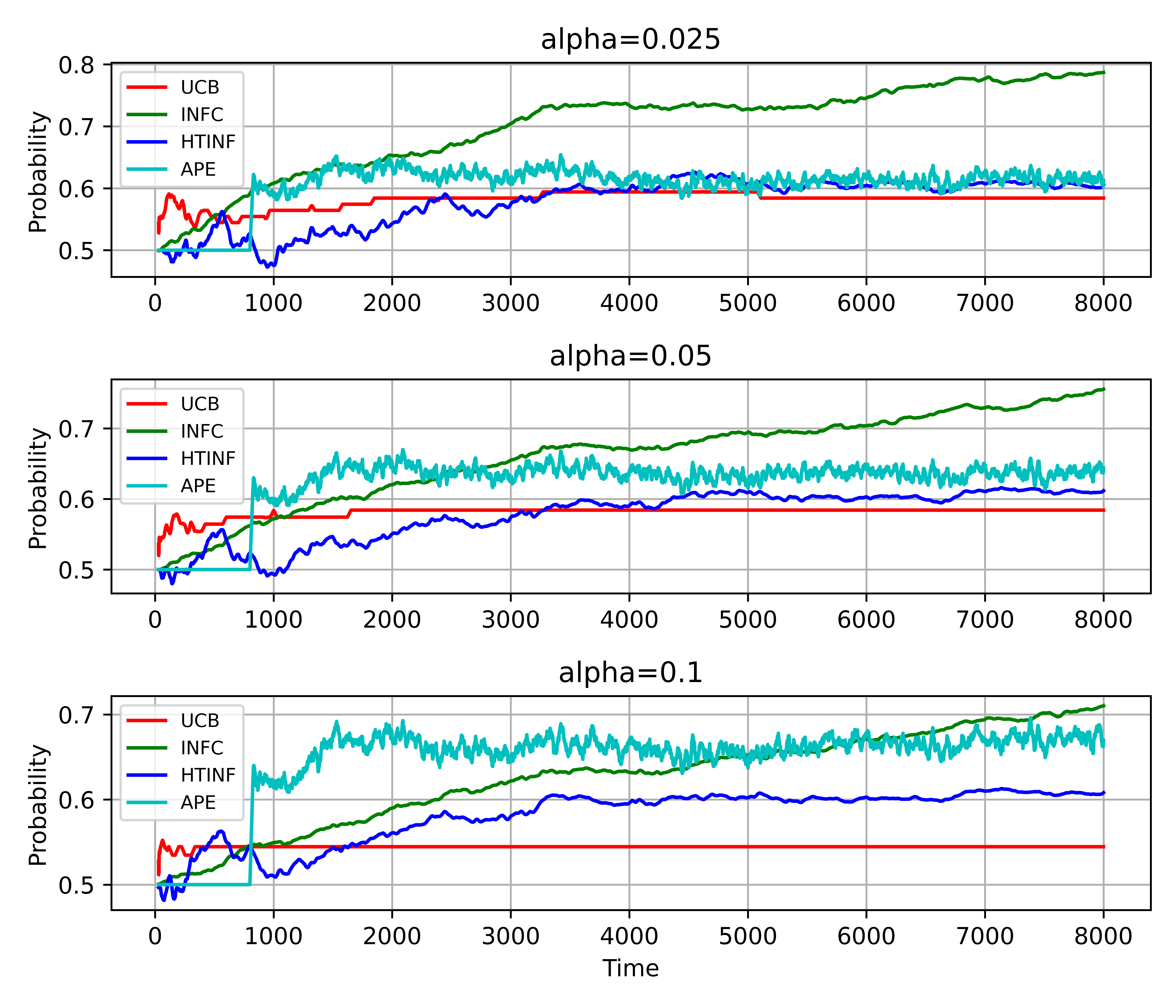}
    \caption{Probability of optimal arm picking mean (aggregated with average filter) for 100 experiments and 8000 iterations with low alphas}
    \label{fig:galaxy}
\end{figure}
\begin{figure}[htp]
    \centering
    \includegraphics[width=12cm, height=5cm]{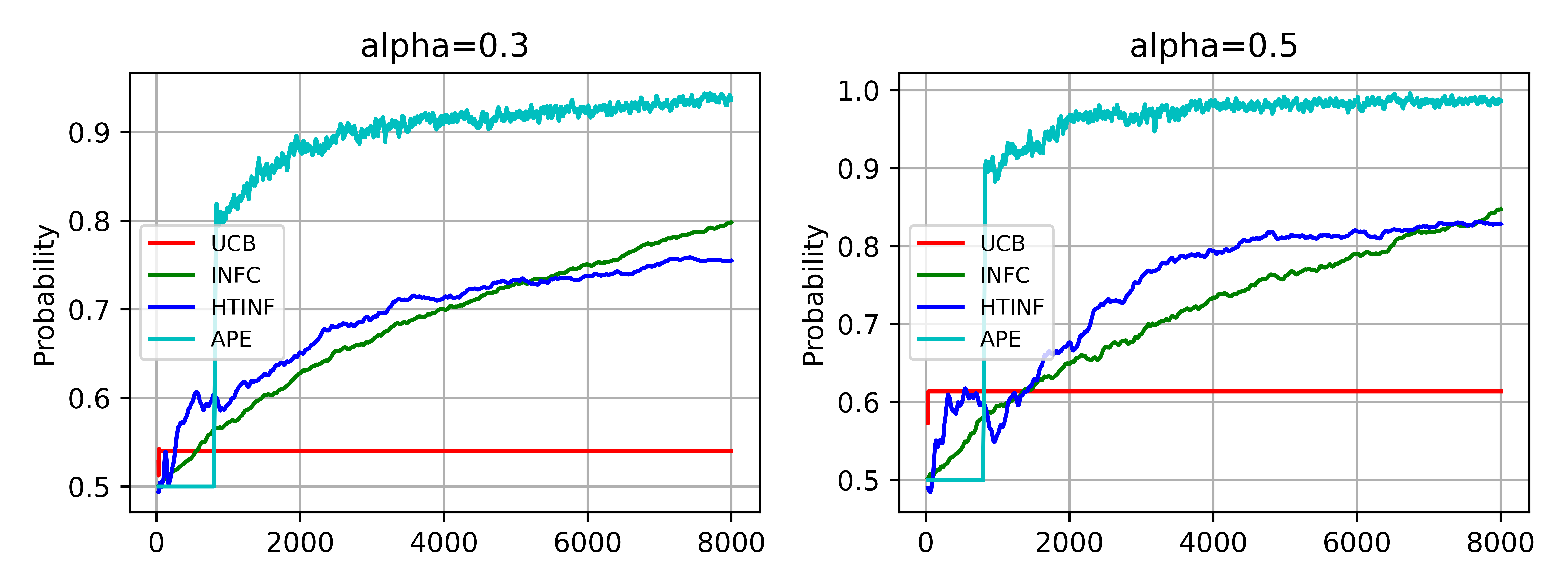}
    \caption{Probability of optimal arm picking mean (aggregated with average filter) for 100 experiments and 8000 iterations with high alphas}
    \label{fig:galaxy2}
\end{figure}
\begin{figure}[htp]
    \centering
    \includegraphics[width=12cm, height=8cm]{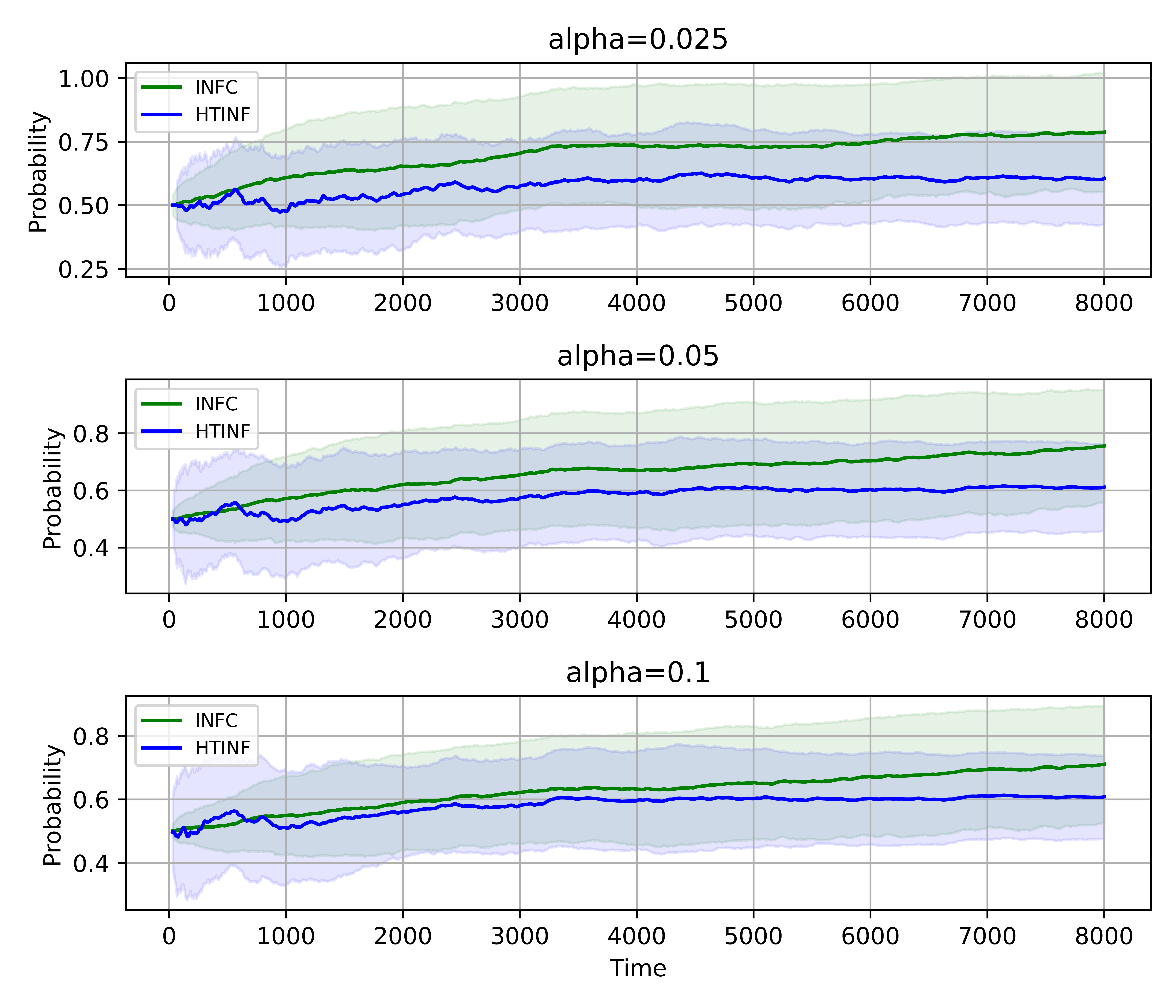}
    \caption{Probability of optimal arm picking mean (aggregated with average filter) for 100 experiments and 8000 iterations with high alphas with ± std bounds for probabilistic methods}
    \label{fig:galaxy3}
\end{figure}
\begin{figure}[htp]
    \centering
    \includegraphics[width=12cm, height=5cm]{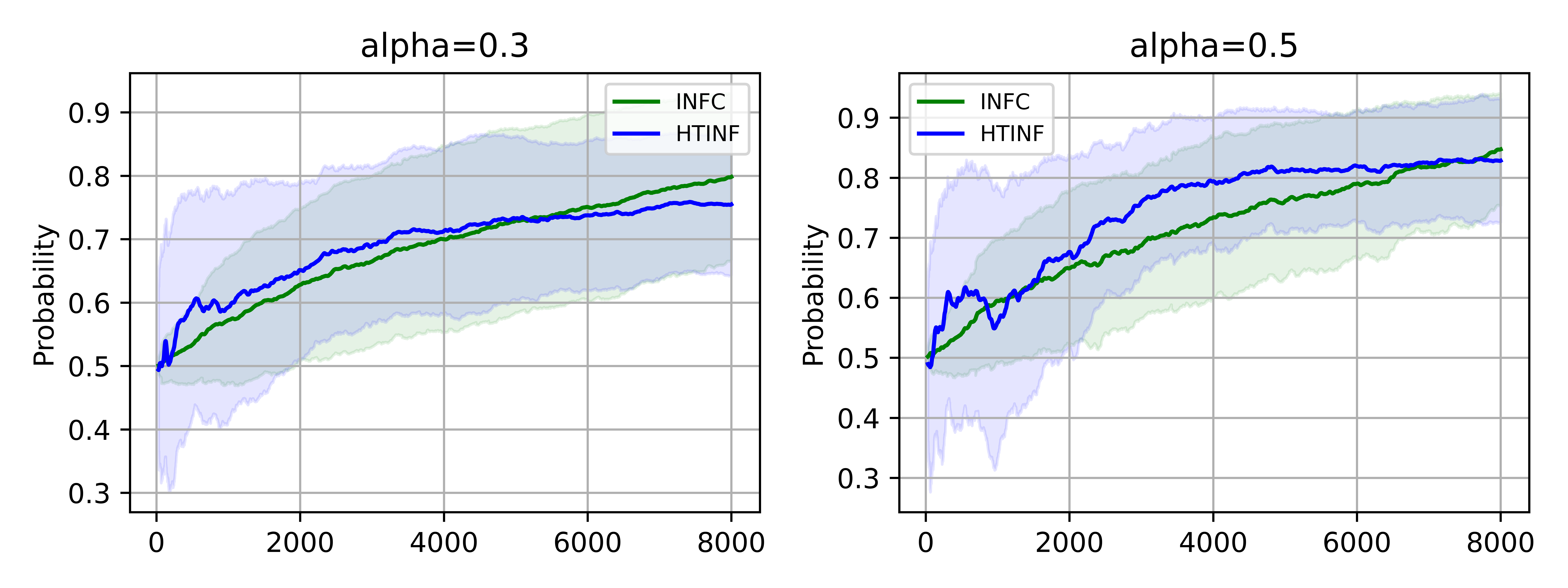}
    \caption{Probability of optimal arm picking mean (aggregated with average filter) for 100 experiments and 8000 iterations with high alphas with ± std bounds for probabilistic methods}
    \label{fig:galaxy4}
\end{figure}

The results presented in Fig.\ref{fig:galaxy}. As we can see, lower values of $\alpha$ (i.e., "heavier" tails) lead to a greater gap in performance of INF-clip compared to HTINF and other algorithms. This is natural, because HTINF truncates the tail of the distribution and does not use outliers in best arm identification, but our algorithm does exactly this. So the heavier the tail, the more samples contribute only to the INF-clip. After reaching some alpha threshold, APE performs better than INF-clip, probably due to a large exploration phase.

\subsection{Reproducibility}

Interested readers can find the source code for the experiments here:

https://github.com/Kutuz4/ImplicitlyNormalizedForecasterWithClipping



\section{Missing proofs of nonlinear bandits section}\label{sec: proofs}

\textbf{Assumptions lemmas}

For begining we prove basic inequalities based on Assumptions. 
\begin{lemma}\label{lem: Jensen for norm}

\begin{enumerate}
    \item 

    For all $x,y \in \R^{n}$ and $\alpha \in (0,1]$: \begin{equation}\label{inq}
    ||x-y||_q^{\alpha+1} \leq 2^{\alpha }||x||_q^{\alpha+1} + 2^{\alpha}||y||_q^{\alpha+1},
\end{equation}
\item 
\begin{equation}\label{jensen norm less 1}
\forall x,y \in \R , x,y \geq 0, {\kappa} \in [0,1]: (x-y)^{\kappa} \leq x^{\kappa} + y^{\kappa}.
\end{equation}
\end{enumerate}
\end{lemma}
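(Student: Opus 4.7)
The plan is to prove both inequalities by reducing them to convexity/concavity facts about the map $t \mapsto t^{r}$ on $[0,\infty)$, which is convex for $r \geq 1$ and concave for $r \in [0,1]$, in combination with the triangle inequality for the $\ell_q$ norm.

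For part (1), I would first invoke the triangle inequality for the $\ell_q$ norm to obtain $\|x-y\|_q \leq \|x\|_q + \|y\|_q$, so since both sides are nonnegative and $\alpha + 1 \geq 1$, raising to the power $\alpha+1$ preserves the inequality:
$$\|x-y\|_q^{\alpha+1} \leq (\|x\|_q + \|y\|_q)^{\alpha+1}.$$
Setting $a = \|x\|_q$ and $b = \|y\|_q$, it then suffices to prove $(a+b)^{\alpha+1} \leq 2^{\alpha}(a^{\alpha+1}+b^{\alpha+1})$ for $a,b \geq 0$. This follows from convexity of $t \mapsto t^{\alpha+1}$ on $[0,\infty)$ (because $\alpha+1 \in [1,2]$): Jensen's inequality gives $\left(\frac{a+b}{2}\right)^{\alpha+1} \leq \frac{a^{\alpha+1}+b^{\alpha+1}}{2}$, and multiplying both sides by $2^{\alpha+1}$ yields exactly the required bound.

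For part (2), the cleanest route is monotonicity of $t \mapsto t^{\kappa}$ on $[0,\infty)$ combined with the fact that $(x-y)^{\kappa}$ is only meaningful when $x \geq y$ (otherwise the left-hand side is understood to be $0$ or the statement is vacuous, since both scalars are nonnegative). In the nontrivial case $x \geq y \geq 0$ we simply note $x - y \leq x$, so by monotonicity $(x-y)^{\kappa} \leq x^{\kappa} \leq x^{\kappa} + y^{\kappa}$, using $y^{\kappa} \geq 0$ in the last step. Alternatively, the stronger subadditivity $(a+b)^{\kappa} \leq a^{\kappa}+b^{\kappa}$ for $a,b \geq 0$ and $\kappa \in [0,1]$ can be derived from concavity of $t \mapsto t^{\kappa}$ by writing $(a+b)^{\kappa} = (a+b)\cdot(a+b)^{\kappa-1}$ and using $(a+b)^{\kappa-1} \leq \min\{a^{\kappa-1},b^{\kappa-1}\}$ since $\kappa-1 \leq 0$.

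I do not anticipate any genuine obstacle: both inequalities are standard consequences of the convexity of power functions, and the only subtle point is bookkeeping the correct constant $2^{\alpha}$ (versus the naive $2^{\alpha+1}$) in part (1), which is precisely what Jensen's inequality applied to the midpoint provides. The argument is short enough that the full proof fits in a few lines.
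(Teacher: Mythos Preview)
Your proof is correct and essentially matches the paper's approach: for part (1) the paper applies Jensen directly to the convex map $z \mapsto \|z\|_q^{\alpha+1}$ at the midpoint $\tfrac{x+(-y)}{2}$, which is equivalent to your triangle-inequality-plus-scalar-convexity decomposition and yields the same constant $2^{\alpha}$. For part (2) the paper simply cites an external reference, so your self-contained monotonicity argument is a welcome addition rather than a deviation.
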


\begin{proof}
    \begin{itemize}
        \item For \eqref{inq} by Jensen's inequality for convex $||\cdot||_q^{\alpha+1}$ with $\alpha> 0$
        $$||x-y||_q^{\alpha+1} = 2^{\alpha+1}||x/2-y/2||_q^{\alpha+1} \leq2^{\alpha}||x||_q^{\alpha+1} + 2^{\alpha}||y||_q^{\alpha+1}. $$
        \item \eqref{jensen norm less 1} is the proposition $9$ from \cite{vural2022mirror}.
    \end{itemize}

\end{proof}
\begin{lemma}\label{lem: Lipschitz f }
    Assumption~\ref{as: Lipshcitz and bounded} implies that $f(x)$ is $M$ Lipschitz on $\S$.
\end{lemma}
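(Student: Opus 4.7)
The plan is to peel the expectation off using Jensen's inequality and then control the stochastic Lipschitz constant $M(\xi)$ in expectation via its $(\alpha{+}1)$-th moment. Recall that by assumption $l_t(x) = \E_\xi[l_t(x,\xi)]$, so for any $x,y \in \S$ I start from
\begin{equation*}
|l_t(x) - l_t(y)| = \bigl| \E_\xi[l_t(x,\xi) - l_t(y,\xi)] \bigr| \leq \E_\xi\bigl[ |l_t(x,\xi) - l_t(y,\xi)| \bigr],
\end{equation*}
which is just Jensen's inequality for the convex function $|\cdot|$ (equivalently, the triangle inequality for expectations).

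Next I invoke Assumption~\ref{as: Lipshcitz and bounded}, which provides the pointwise-in-$\xi$ bound $|l_t(x,\xi) - l_t(y,\xi)| \leq M(\xi)\|x-y\|_2$ on $\S_\tau \supset \S$. Substituting this and pulling the deterministic factor $\|x-y\|_2$ out of the expectation yields
\begin{equation*}
|l_t(x) - l_t(y)| \leq \E_\xi[M(\xi)] \cdot \|x-y\|_2.
\end{equation*}

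It remains to show $\E_\xi[M(\xi)] \leq M$. Since $M(\xi) \geq 0$ and $t \mapsto t^{\alpha+1}$ is convex on $[0,\infty)$ for $\alpha \in (0,1]$, Jensen's inequality gives $(\E_\xi[M(\xi)])^{\alpha+1} \leq \E_\xi[M(\xi)^{\alpha+1}] \leq M^{\alpha+1}$, and taking the $(\alpha+1)$-th root yields $\E_\xi[M(\xi)] \leq M$. Combining with the previous display completes the proof that $l_t = f$ is $M$-Lipschitz on $\S$ with respect to $\|\cdot\|_2$.

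There is essentially no hard step here; the only thing to be careful about is that the pointwise Lipschitz bound from Assumption~\ref{as: Lipshcitz and bounded} is stated on the enlarged set $\S_\tau$, which does contain $\S$, so the inequality is legitimately available for all $x,y \in \S$. This lemma will be used later in Section~\ref{sec: proofs} to pass from per-sample Lipschitz constants to a deterministic Lipschitz constant for the smoothed surrogate $\hat l_t^\tau$, which is needed for the proximity argument bounding $|l_t(x) - \hat l_t^\tau(x)|$ by $M\tau$ in the regret estimate~\eqref{eq: clip theorem eq intro}.
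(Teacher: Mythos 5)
Your proof is correct and mirrors the paper's argument step for step: Jensen's inequality (triangle inequality for expectations) to pass the absolute value inside the expectation, the pointwise Lipschitz bound from Assumption~\ref{as: Lipshcitz and bounded}, and a second application of Jensen's inequality to convert $\E_\xi[M(\xi)]$ into $\E_\xi[M(\xi)^{\alpha+1}]^{1/(\alpha+1)} \leq M$. The only difference is that you spell out details the paper leaves implicit (convexity of $t\mapsto t^{\alpha+1}$, and the remark that $\S\subset\S_\tau$).
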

\begin{proof}
    For all $x,y \in \S$
    \begin{eqnarray}|f(x) - f(y)| &=& |\E[f(x,\xi) - f(y,\xi)]| \overset{\text{Jensen's inq}}{\leq}  \E[|f(x,\xi) - f(y,\xi)|] \notag \\
    &\leq& \E[M(\xi)]||x-y||_2 \notag \\
    &\overset{\text{Jensen's inq}}{\leq}& \E[M(\xi)^{\alpha+1}]^\frac{1}{\alpha+1} ||x-y||_2 \leq M ||x-y||_2. \notag\end{eqnarray}
\end{proof}

\begin{lemma}\label{lem: Smooth f }
    Assumption~\ref{as: Smooth and bounded} implies that $f(x)$ is $L$ Smooth on $\S$.
\end{lemma}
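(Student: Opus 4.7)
The statement is the natural counterpart to Lemma~\ref{lem: Lipschitz f } for the smooth setting, and my plan is to follow the same two-step structure: push the norm inside the expectation by Jensen, then convert the $(\alpha{+}1)$-th moment bound into a first-moment bound by Jensen again. The key identification is that $f(x)$ stands for $l_t(x) = \E_\xi[l_t(x,\xi)]$, so the target inequality is
\[
\|\nabla f(x) - \nabla f(y)\|_2 \leq L \, \|x-y\|_2 \qquad \forall x,y\in \S.
\]

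First I would interchange gradient and expectation, writing $\nabla f(x) = \E_\xi[\nabla l_t(x,\xi)]$. This interchange is standard once the per-sample gradients are integrable, which is guaranteed by $\E_\xi[L(\xi)^{\alpha+1}]<\infty$ together with integrability of $\nabla l_t(x_0,\xi)$ at any fixed reference point $x_0$ (for instance, via Assumption~\ref{as: bounded} applied to a difference quotient and dominated convergence). With this in hand,
\[
\|\nabla f(x) - \nabla f(y)\|_2
= \bigl\| \E_\xi[\nabla l_t(x,\xi) - \nabla l_t(y,\xi)] \bigr\|_2
\leq \E_\xi\bigl[\|\nabla l_t(x,\xi) - \nabla l_t(y,\xi)\|_2\bigr]
\]
by Jensen's inequality applied to the convex function $\|\cdot\|_2$ (exactly as in the Lipschitz case).

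Next I invoke the per-sample smoothness from Assumption~\ref{as: Smooth and bounded} to upper bound the integrand by $L(\xi)\|x-y\|_2$, pulling the deterministic factor $\|x-y\|_2$ outside. This yields $\E_\xi[L(\xi)]\,\|x-y\|_2$. A second application of Jensen's inequality, this time to the convex map $t\mapsto t^{\alpha+1}$ on $\R_+$ (convex for all $\alpha\in(0,1]$), gives
\[
\E_\xi[L(\xi)] \leq \bigl(\E_\xi[L(\xi)^{\alpha+1}]\bigr)^{\tfrac{1}{\alpha+1}} \leq L,
\]
which closes the chain of inequalities. Restricting $x,y$ from $\S_\tau$ to $\S \subset \S_\tau$ is immediate since Assumption~\ref{as: Smooth and bounded} is stated on the larger set.

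The only genuinely delicate point is the interchange of gradient and expectation in the first step, and the obstacle is mild: it can be handled either by citing dominated convergence with the integrable envelope $L(\xi)\|x-y\|_2 + \|\nabla l_t(x_0,\xi)\|$, or, to avoid this subtlety altogether, by working with the function-value form of smoothness and deriving the gradient version afterward. Once that is settled, the two Jensen applications are completely parallel to the Lipschitz proof, so the write-up should be very short.
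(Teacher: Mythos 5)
Your proof takes exactly the same route as the paper: write $\nabla f(x)-\nabla f(y)$ as an expectation, apply Jensen to the norm, use per-sample smoothness to extract $L(\xi)\|x-y\|_2$, then apply Jensen to $t\mapsto t^{\alpha+1}$ to bound $\E_\xi[L(\xi)]\le L$. The paper elides the interchange of gradient and expectation that you flag; otherwise the two arguments coincide step for step.
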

\begin{proof}
    For all $x,y \in \S$
    \begin{eqnarray}|\nabla f(x) - \nabla f(y)| &=& |\E[\nabla f(x,\xi) - \nabla f(y,\xi)]|  \notag \\&\overset{\text{Jensen's inq}}{\leq}&  \E[|\nabla f(x,\xi) - \nabla f(y,\xi)|] \notag \\
    &\leq& \E[L(\xi)]||x-y||_2 \notag \\
    &\overset{\text{Jensen's inq}}{\leq}& \E[L(\xi)^{\alpha+1}]^\frac{1}{\alpha+1} ||x-y||_2 \leq L ||x-y||_2. \notag\end{eqnarray}
\end{proof}

\textbf{Gradient-free lemmas}

The next lemma gives estimates for the quality of the approximation function $\hat{l}_t^\tau(x)$. The proof can be found in \cite[ Theorem $2.1$]{gasnikov2022power}.
\begin{lemma}\label{lem: hat_f properties intro}
Let Assumption~\ref{as: convex} holds.
\begin{enumerate}
\item If Assumption~\ref{as: Lipshcitz and bounded} holds too then function $\hat{l}_t^\tau(x)$ is convex, Lipschitz with constant $M$  on $\S$, and satisfies
$$\sup \limits_{x \in \S} |\hat{l}_t^\tau(x) - l_t(x)| \leq \tau M.$$

\item If Assumption~\ref{as: Smooth and bounded} holds too then function $\hat{l}_t^\tau(x)$ is convex, Smooth with constant $L$  on $\S$, and satisfies
$$\sup \limits_{x \in \S} |\hat{l}_t^\tau(x) - l_t(x)| \leq \frac{L\tau^2}{2}.$$

    \item 
Function $\hat{l}_t^\tau(x)$ is differentiable on $\S$ with the following gradient
$$\nabla \hat{l}_t^\tau (x) = \E_\mathbf{e}\left[\frac{d}{\tau} l_t(x + \tau \mathbf{e}) \mathbf{e}\right].$$

\end{enumerate}
\end{lemma}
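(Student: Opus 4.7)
The plan is to derive all three items from the definition $\hat{l}_t^\tau(x) = \E_{\mathbf{e}}[l_t(x + \tau \mathbf{e})]$ by combining pointwise structural properties of $l_t$ (furnished by Assumptions~\ref{as: convex}, \ref{as: Lipshcitz and bounded}, \ref{as: Smooth and bounded}) with standard operations that commute with expectation. For convexity I first observe that for any fixed $\mathbf{e} \in S^2$ and $x \in \S$, the shifted argument $x + \tau \mathbf{e}$ lies in $\S_\tau = \S + \tau B^2$, where $l_t(\cdot, \xi)$ is convex by Assumption~\ref{as: convex}. Taking $\E_\xi$ preserves convexity on $\S_\tau$, so $x \mapsto l_t(x + \tau \mathbf{e})$ is convex on $\S$ for every fixed $\mathbf{e}$, and a final average over $\mathbf{e}$ again preserves convexity. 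Hence $\hat{l}_t^\tau$ is convex on $\S$.

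For the Lipschitz transfer I fix $x, y \in \S$; by Assumption~\ref{as: Lipshcitz and bounded} applied in $\S_\tau$,
$$|l_t(x+\tau \mathbf{e},\xi) - l_t(y+\tau \mathbf{e},\xi)| \leq M(\xi)\|x-y\|_2,$$
and taking first $\E_\xi$ and then $\E_\mathbf{e}$, together with Jensen's inequality on $\E_\xi[M(\xi)^{1+\alpha}]^{1/(1+\alpha)}$ exactly as in Lemma~\ref{lem: Lipschitz f }, yields the $M$-Lipschitz bound for $\hat{l}_t^\tau$. The smooth case is identical at the level of gradients, giving Lemma~\ref{lem: Smooth f }-style transfer of the smoothness constant $L$. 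The approximation bound in the Lipschitz case is immediate from Jensen:
$$|\hat{l}_t^\tau(x) - l_t(x)| = \big|\E_\mathbf{e}[l_t(x+\tau \mathbf{e}) - l_t(x)]\big| \leq M\, \E_\mathbf{e}\|\tau \mathbf{e}\|_2 = \tau M.$$
In the smooth case, Taylor's theorem gives $|l_t(x + \tau \mathbf{e}) - l_t(x) - \tau \langle \nabla l_t(x), \mathbf{e}\rangle| \leq L\tau^2/2$ pointwise; the linear term vanishes under $\E_\mathbf{e}$ by the central symmetry of $U(S^2)$, leaving the stated $L\tau^2/2$ bound.

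The main technical step is the gradient formula in item~3, which is not obtained by bare differentiation under the integral but rather via a Stokes'/divergence-theorem identity. Concretely, after changing variables $y = x + \tau \mathbf{e}$ one rewrites $\hat{l}_t^\tau(x)$ as a properly normalised volume (respectively surface) integral, differentiates, and then uses the divergence theorem to convert $\int_{B_\tau(x)} \nabla l_t(y)\, dy$ into the boundary integral $\int_{\partial B_\tau(x)} l_t(y)\, \nu(y)\, dS(y)$; the ratio of the unit ball volume to the unit sphere surface measure produces the $n/\tau$ prefactor in the stated formula. The interchange of gradient and expectation is legitimate precisely because of the Lipschitz/smoothness hypothesis on $\S_\tau$ and the finite $(1+\alpha)$-moment bound, which is the reason $\tau > 0$ is baked into Assumption~\ref{as: convex}. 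The full argument is the one carried out in \cite[Theorem~2.1]{gasnikov2022power}, and we invoke it directly. The main obstacle, were one to reproduce it from scratch, would be the careful bookkeeping of the sphere-versus-ball normalising constants and the domain check that $x + \tau \mathbf{e} \in \S_\tau$ for every $\mathbf{e} \in S^2$; the analytic content beyond this is routine.
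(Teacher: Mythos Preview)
Your proposal is correct and in fact supplies more detail than the paper, which does not prove this lemma at all but simply refers the reader to \cite[Theorem~2.1]{gasnikov2022power}. Since you ultimately invoke the same reference for the gradient identity in item~3, the two approaches coincide; your sketches of items~1 and~2 (convexity and Lipschitz/smoothness preserved under expectation, the $\tau M$ and $L\tau^2/2$ bounds via Jensen and Taylor with the odd term killed by symmetry of $U(S^2)$) are standard and accurate.
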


The following lemma gives some useful facts about the measure concentration on the Euclidean unit sphere.
\begin{lemma}\label{upper bounds}
For $q \geq 2, \alpha \in (0,1]$  
$$\E_\mathbf{e} \left[||\mathbf{e}||_q^{2(\alpha+1)}\right]  \leq a_{q}^{2(\alpha+1)} = n^{\frac1q - \frac12}  \min \{ \sqrt{32\ln n - 8} , \sqrt{2q - 1}\}.$$. 
\end{lemma}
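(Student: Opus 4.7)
I would derive the bound by obtaining two independent estimates on $\E_\mathbf{e}[\|\mathbf{e}\|_q^q]$ and $\E_\mathbf{e}[\|\mathbf{e}\|_\infty^{2(\alpha+1)}]$, one for each branch inside the minimum defining $a_q$, and then pulling each down to the exponent $2(\alpha+1)\le 4$ by Jensen's inequality. This is essentially the route Shamir uses to establish the second-moment version $\E[\|\mathbf{e}\|_q^2]\le a_q^2$; what is new is only the final power-mean interpolation step.

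\textbf{Two source estimates.} For the $\sqrt{2q-1}$ branch, use the polar decomposition $\mathbf{e}=g/\|g\|_2$ with $g\sim N(0,I_n)$, in which the direction is independent of the radius, to write $\E[|\mathbf{e}_i|^r]=\E[|g_i|^r]/\E[\|g\|_2^r]$. A standard Gaussian moment bound gives $\E[|g_1|^r]\le r^{r/2}$, and Jensen applied to the convex map $x\mapsto x^{r/2}$ yields $\E[\|g\|_2^r]\ge n^{r/2}$; summing over the $n$ coordinates with $r=q$ then produces $\E[\|\mathbf{e}\|_q^q]\le(2q-1)^{q/2}n^{1-q/2}$ (absorbing the gap between $q^{q/2}$ and $(2q-1)^{q/2}$). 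For the $\sqrt{32\log n -8}$ branch, combine the deterministic inequality $\|\mathbf{e}\|_q\le n^{1/q}\|\mathbf{e}\|_\infty$ with L\'evy's concentration on the sphere applied to the $1$-Lipschitz coordinate maps $\mathbf{e}\mapsto\mathbf{e}_i$ plus a union bound, which yields $\PP(\|\mathbf{e}\|_\infty>t)\le 2n\exp(-nt^2/2)$; integrating this tail via $\E[X^r]=r\int_0^\infty t^{r-1}\PP(X>t)\,dt$ gives $\E[\|\mathbf{e}\|_\infty^{2(\alpha+1)}]\le((32\log n-8)/n)^{\alpha+1}$.

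\textbf{Interpolation and main obstacle.} When $q\ge 2(\alpha+1)$ (automatic for $q\ge 4$), Jensen applied to the concave map $x\mapsto x^{2(\alpha+1)/q}$ gives
\begin{equation*}
\E[\|\mathbf{e}\|_q^{2(\alpha+1)}]=\E\bigl[(\|\mathbf{e}\|_q^q)^{2(\alpha+1)/q}\bigr]\le\bigl(\E[\|\mathbf{e}\|_q^q]\bigr)^{2(\alpha+1)/q}.
\end{equation*}
Plugging in the first source bound yields $(2q-1)^{\alpha+1}n^{(2/q-1)(\alpha+1)}=(n^{1/q-1/2}\sqrt{2q-1})^{2(\alpha+1)}$, while composing the second bound with the factor $n^{2(\alpha+1)/q}$ from the norm comparison yields $(n^{1/q-1/2}\sqrt{32\log n-8})^{2(\alpha+1)}$; taking the smaller of the two gives exactly $a_q^{2(\alpha+1)}$. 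The residual range $q\in[2,2(\alpha+1))\subset[2,4)$ is handled separately by the trivial inequality $\|\mathbf{e}\|_q\le\|\mathbf{e}\|_2=1$, which reduces the $2(\alpha+1)$-moment to the second moment (already bounded by $a_q^2$) together with the observation that $a_q\ge 1$ throughout this small-$q$ regime because the factor $\sqrt{2q-1}\ge\sqrt 3$ dominates. The main technical obstacle is pinning down the explicit constant $32\log n-8$: the concentration constant depends on the exact form of L\'evy's inequality invoked and must be tracked carefully through the tail integration. Once that constant is nailed down, everything else reduces to routine Gaussian and chi-square moment computations.
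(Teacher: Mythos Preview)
The paper does not supply its own proof here: it cites \cite{gorbunov2019upper} for the prototype $\alpha=1$ case and defers the extension to $\alpha\in(0,1]$ to \cite{kornilov2023gradient}. Your overall two-branch strategy (polar-Gaussian moments for the $\sqrt{2q-1}$ branch, L\'evy concentration on $\|\mathbf e\|_\infty$ for the $\sqrt{32\ln n-8}$ branch, then a Jensen interpolation) is the standard route taken in those references.

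That said, there is a real gap in your handling of the residual range $q\in[2,2(\alpha+1))$. You claim $a_q\ge 1$ there ``because the factor $\sqrt{2q-1}\ge\sqrt 3$ dominates,'' but this ignores the prefactor $n^{1/q-1/2}$, which is strictly below $1$ for every $q>2$ and is not compensated by $\sqrt{2q-1}$ once $n$ is moderately large. Concretely, at $q=3$ the minimum inside $a_q$ equals $\sqrt 5$ for all $n\ge 2$, so $a_3=\sqrt 5\,n^{-1/6}<1$ whenever $n>125$, and your final step $a_q^2\le a_q^{2(\alpha+1)}$ goes the wrong way. A clean repair is to first establish the $\alpha=1$ bound $\E[\|\mathbf e\|_q^{4}]\le a_q^{4}$ for \emph{all} $q\ge 2$, and only then pull down to general $\alpha$ via the concave Jensen step $\E[\|\mathbf e\|_q^{2(\alpha+1)}]\le\bigl(\E[\|\mathbf e\|_q^{4}]\bigr)^{(\alpha+1)/2}\le a_q^{2(\alpha+1)}$. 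Your argument already delivers the $\alpha=1$ case for $q\ge 4$; for $q\in[2,4)$ combine the norm comparison $\|\mathbf e\|_q\le n^{1/q-1/4}\|\mathbf e\|_4$ with the exact moment $\E[\|\mathbf e\|_4^4]=n\,\E[\mathbf e_1^4]=3/(n+2)$ to obtain $\E[\|\mathbf e\|_q^4]\le 3\,n^{4/q-2}\le(2q-1)^2 n^{4/q-2}$, which recovers the $\sqrt{2q-1}$ branch without any appeal to $a_q\ge 1$.
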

This Lemma is a generalization of the Lemma from \cite{gorbunov2019upper} for $\alpha < 1$. Proof can be found in \cite{kornilov2023gradient}.

\begin{lemma}\label{inner product estimate}
For the random  vector $\mathbf{e}$ uniformly distributed on the Euclidean sphere $\{\mathbf{e} \in \R^n: ||\mathbf{e}||_2 = 1\}$ and for any $r \in \R^n$, we have
$$\E_\mathbf{e}[|\langle \mathbf{e}, r \rangle|] \leq \frac{||r||_2}{\sqrt{n}}.$$
\end{lemma}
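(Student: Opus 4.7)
The plan is to exploit the rotational invariance of the uniform measure on the sphere to reduce the estimate to a one-dimensional moment computation, and then apply Jensen's inequality together with a symmetry argument to bound that moment.

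First I would observe that the distribution of $\mathbf{e}$ is invariant under orthogonal transformations. Thus if $r = 0$ the inequality is trivial, and otherwise I can pick any orthogonal matrix $O$ that maps the unit vector $r/\|r\|_2$ to the first standard basis vector $e_1$. Substituting $\mathbf{e} \mapsto O^\top \mathbf{e}$ under the expectation preserves the distribution, so
\[
\E_{\mathbf{e}}\bigl[|\langle \mathbf{e}, r \rangle|\bigr] = \|r\|_2 \cdot \E_{\mathbf{e}}\bigl[|\langle \mathbf{e}, e_1 \rangle|\bigr] = \|r\|_2 \cdot \E_{\mathbf{e}}\bigl[|\mathbf{e}_1|\bigr],
\]
where $\mathbf{e}_1$ denotes the first coordinate of $\mathbf{e}$. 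So everything reduces to showing $\E[|\mathbf{e}_1|] \leq 1/\sqrt{n}$.

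Next, by Jensen's inequality applied to the concave function $t \mapsto \sqrt{t}$,
\[
\E[|\mathbf{e}_1|] = \E\bigl[\sqrt{\mathbf{e}_1^2}\bigr] \leq \sqrt{\E[\mathbf{e}_1^2]}.
\]
To compute $\E[\mathbf{e}_1^2]$, I would use symmetry once more: the coordinates $\mathbf{e}_1,\dots,\mathbf{e}_n$ are exchangeable under the uniform measure on $S^2$, so $\E[\mathbf{e}_i^2]$ takes the same value for each $i$. Summing and using $\|\mathbf{e}\|_2^2 = 1$ almost surely gives
\[
1 = \E\!\left[\sum_{i=1}^n \mathbf{e}_i^2\right] = n\,\E[\mathbf{e}_1^2],
\]
hence $\E[\mathbf{e}_1^2] = 1/n$ and therefore $\E[|\mathbf{e}_1|] \leq 1/\sqrt{n}$. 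Combining with the first display yields the claim.

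There is no real obstacle here; the proof is a short three-line argument combining rotational invariance, Jensen, and coordinate symmetry. The only point to be careful about is verifying that the substitution $\mathbf{e} \mapsto O^\top \mathbf{e}$ is indeed measure-preserving on the uniform distribution on $S^2$, which follows because the uniform measure on the sphere is the unique rotationally invariant probability measure on $S^2$.
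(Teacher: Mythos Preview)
Your proof is correct. The paper actually states this lemma without proof, treating it as a standard fact about the uniform measure on the sphere, so there is nothing to compare against; your argument via rotational invariance, Jensen's inequality, and coordinate exchangeability is exactly the standard short proof one would expect here.
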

\begin{lemma}\label{inner product grad r}
Let $g_t(x, \xi, \mathbf{e})$  be defined in \eqref{g} and $\hat{l}_t^\tau(x)$ be defined in \eqref{hat_f}. Then, the following holds under Assumption \ref{as: noize}:
$$\E_{\xi, \mathbf{e}} [\langle g_t(x,\xi, \mathbf{e}), r \rangle] \geq \langle \nabla \hat{l}_t^\tau(x) , r \rangle - \frac{n \Delta}{\tau}\E_\mathbf{e} [|\langle \mathbf{e}, r \rangle|] $$
for any $r \in \R^n.$
\end{lemma}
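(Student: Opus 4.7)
The plan is to decompose $g_t(x,\xi,\mathbf{e})$ along the two additive parts of the observed loss $\phi_t(x+\tau\mathbf{e},\xi) = l_t(x+\tau\mathbf{e},\xi) + \delta_t(x+\tau\mathbf{e})$, handle each of them separately by taking expectations, and then combine them. Specifically I would write
\[
\E_{\xi,\mathbf{e}}[\langle g_t(x,\xi,\mathbf{e}),r\rangle]
= \underbrace{\E_{\xi,\mathbf{e}}\Bigl[\tfrac{n}{\tau} l_t(x+\tau\mathbf{e},\xi)\langle \mathbf{e},r\rangle\Bigr]}_{\text{(I)}}
+ \underbrace{\E_{\mathbf{e}}\Bigl[\tfrac{n}{\tau}\delta_t(x+\tau\mathbf{e})\langle \mathbf{e},r\rangle\Bigr]}_{\text{(II)}},
\]
using that $\delta_t$ is independent of $\xi$ and that expectations of inner products pass through by linearity.

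For term (I), I would first pull the $\xi$-expectation inside and use $\E_\xi[l_t(x+\tau\mathbf{e},\xi)] = l_t(x+\tau\mathbf{e})$, which is legitimate because $\xi_t$ is independent of $\mathbf{e}_t$ in the algorithm's construction. This reduces (I) to $\E_\mathbf{e}\bigl[\tfrac{n}{\tau} l_t(x+\tau\mathbf{e}) \mathbf{e}\bigr]^\top r$, which is exactly $\langle \nabla \hat{l}_t^\tau(x), r\rangle$ by the gradient identity in part (3) of Lemma~\ref{lem: hat_f properties intro}.

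For term (II), I would use Assumption~\ref{as: noize} pointwise: $|\delta_t(x+\tau\mathbf{e})| \leq \Delta$ for every realization of $\mathbf{e}$. Taking absolute values and applying the elementary bound $|\E[Y]| \leq \E[|Y|]$ gives
\[
|\text{(II)}| \leq \tfrac{n\Delta}{\tau}\,\E_\mathbf{e}[|\langle \mathbf{e},r\rangle|],
\]
so in particular $\text{(II)} \geq -\tfrac{n\Delta}{\tau}\,\E_\mathbf{e}[|\langle \mathbf{e},r\rangle|]$. Combining with the equality for (I) yields the claimed lower bound, and the proof is complete.

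There is no substantial obstacle here; the only point requiring a little care is the independence of $\xi$ and $\mathbf{e}$ used to simplify term (I), which is guaranteed by the sampling scheme in Algorithm~\ref{alg:clip}, and the fact that we only need a one-sided bound on (II), since the adversarial noise can only hurt the inner product by up to $\tfrac{n\Delta}{\tau}\E_\mathbf{e}[|\langle \mathbf{e},r\rangle|]$ in either direction.
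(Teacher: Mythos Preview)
Your proof is correct and follows essentially the same approach as the paper: split $g_t$ into the stochastic-loss part and the adversarial-noise part, identify the first with $\langle \nabla \hat{l}_t^\tau(x), r\rangle$ via the gradient formula in Lemma~\ref{lem: hat_f properties intro}, and lower-bound the second using $|\delta_t|\le\Delta$. The only cosmetic difference is that you state the independence of $\xi$ and $\mathbf{e}$ explicitly when handling term (I), whereas the paper simply iterates the expectations.
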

\begin{proof}
By definition

$$g_t(x, \xi, \mathbf{e}) = \frac{n}{\tau}(l_t(x + \tau \mathbf{e}, \xi) + \delta_t(x + \tau \mathbf{e})) \mathbf{e}.$$
Then
$$\E_{\xi, \mathbf{e}} [\langle g_t(x,\xi, \mathbf{e}), r \rangle] = \frac{n}{\tau}\E_{\xi, \mathbf{e}} [\langle l_t(x + \tau \mathbf{e}, \xi) \mathbf{e}, r \rangle]  + \frac{n}{\tau}\E_{\xi, \mathbf{e}} [\langle \delta_t(x + \tau \mathbf{e}) \mathbf{e}, r \rangle]. $$
In the first term
$$\frac{n}{\tau}\E_{\xi, \mathbf{e}} [\langle (l_t(x + \tau \mathbf{e}, \xi) )\mathbf{e}, r \rangle]  =\frac{n}{\tau}\E_{ \mathbf{e}} [\langle \E_{\xi}[l_t(x + \tau \mathbf{e}, \xi)] \mathbf{e}, r \rangle] $$
$$= \frac{n}{\tau} \langle \E_{ \mathbf{e}} [l_t(x + \tau \mathbf{e}) \mathbf{e}], r \rangle.$$
Using Lemma \ref{lem: hat_f properties intro}
$$ \frac{n}{\tau} \langle \E_{ \mathbf{e}} [l_t(x + \tau \mathbf{e}) \mathbf{e}], r \rangle = \langle \nabla \hat{l}_t^\tau(x), r \rangle.$$
In the second term we use Assumption \ref{as: noize}
$$\frac{n}{\tau}\E_{\xi, \mathbf{e}} [\langle \delta_t(x + \tau \mathbf{e})\mathbf{e}, r \rangle] \geq - \frac{n\Delta}{\tau} \E_\mathbf{e}[|\langle \mathbf{e}, r \rangle|].$$
Adding two terms together we get the necessary result.

\end{proof}

Finally, a lemma on the boundedness $(\alpha+1)$-th moment of the estimated gradient.
\begin{lemma}\label{lem: grad 1 + k norm}
Under Assumptions \ref{as: bounded} and \ref{as: noize}, for $q \in [2, +\infty)$, we have
$$\E_{\xi, \mathbf{e}}[||g_t(x,\xi,\mathbf{e})||_q^{\alpha+1}]\leq 2^{\alpha}\left(\frac{na_{q}B}{\tau}\right)^{\alpha+1} + 2^{\alpha }\left(\frac{na_{q}\Delta}{\tau}\right)^{\alpha+1}  = \sigma_{q}^{\alpha+1},$$
where $a_q = n^{\frac1q - \frac12}  \min \{ \sqrt{32\ln n - 8} , \sqrt{2q - 1}\}.$
\end{lemma}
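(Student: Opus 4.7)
The plan is to expand $\|g_t(x,\xi,\mathbf{e})\|_q$ directly, separate the stochastic loss part from the adversarial noise part via the inequality from Lemma~\ref{lem: Jensen for norm}, and then use independence of $\xi$ and $\mathbf{e}$ together with the existing concentration bound on $\|\mathbf{e}\|_q$ from Lemma~\ref{upper bounds}. There is no subtle argument here; the entire content is bookkeeping that respects the heavy-tailed exponent $\alpha+1 \le 2$.

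First I would use the definition $g_t(x,\xi,\mathbf{e}) = \frac{n}{\tau}\bigl(l_t(x+\tau\mathbf{e},\xi) + \delta_t(x+\tau\mathbf{e})\bigr)\mathbf{e}$ to write
$$\|g_t(x,\xi,\mathbf{e})\|_q^{\alpha+1} = \Bigl(\frac{n}{\tau}\Bigr)^{\alpha+1} \bigl|l_t(x+\tau\mathbf{e},\xi) + \delta_t(x+\tau\mathbf{e})\bigr|^{\alpha+1} \|\mathbf{e}\|_q^{\alpha+1}.$$
Applying inequality~\eqref{inq} from Lemma~\ref{lem: Jensen for norm} (in its scalar form $|a+b|^{\alpha+1}\le 2^{\alpha}|a|^{\alpha+1}+2^{\alpha}|b|^{\alpha+1}$) to $|l_t(x+\tau\mathbf{e},\xi) + \delta_t(x+\tau\mathbf{e})|^{\alpha+1}$ splits the bound into a stochastic and an adversarial piece, each multiplied by $\|\mathbf{e}\|_q^{\alpha+1}$.

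Next I would take expectation and exploit the independence of $\xi$ and $\mathbf{e}$. For the stochastic piece, conditioning on $\mathbf{e}$ and using Assumption~\ref{as: bounded} gives $\E_\xi[|l_t(x+\tau\mathbf{e},\xi)|^{\alpha+1}]\le B^{\alpha+1}$ pointwise in $\mathbf{e}$, so
$$\E_{\xi,\mathbf{e}}\bigl[|l_t(x+\tau\mathbf{e},\xi)|^{\alpha+1} \|\mathbf{e}\|_q^{\alpha+1}\bigr] \le B^{\alpha+1}\, \E_\mathbf{e}[\|\mathbf{e}\|_q^{\alpha+1}].$$
For the adversarial piece, Assumption~\ref{as: noize} gives $|\delta_t(x+\tau\mathbf{e})|\le \Delta$ deterministically, so the corresponding expectation is bounded by $\Delta^{\alpha+1}\,\E_\mathbf{e}[\|\mathbf{e}\|_q^{\alpha+1}]$.

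Finally, I would bound $\E_\mathbf{e}[\|\mathbf{e}\|_q^{\alpha+1}]$ using Jensen's inequality for the concave map $t\mapsto t^{(\alpha+1)/(2(\alpha+1))}= t^{1/2}$, which yields
$$\E_\mathbf{e}[\|\mathbf{e}\|_q^{\alpha+1}] \le \bigl(\E_\mathbf{e}[\|\mathbf{e}\|_q^{2(\alpha+1)}]\bigr)^{1/2} \le a_q^{\alpha+1},$$
where the second inequality is precisely Lemma~\ref{upper bounds}. Collecting the two pieces and reinstating the prefactor $(n/\tau)^{\alpha+1}$ gives the advertised bound $2^{\alpha}(na_q B/\tau)^{\alpha+1} + 2^{\alpha}(na_q\Delta/\tau)^{\alpha+1}$. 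The only mildly delicate step is ensuring the moment bound on $\|\mathbf{e}\|_q$ is invoked at the correct exponent, since Assumption~\ref{as: bounded} only gives control on the $(\alpha+1)$-th moment and not the $2(\alpha+1)$-th moment of $l_t(x,\xi)$; the independence of $\xi$ and $\mathbf{e}$ is what lets us avoid needing any such higher moment on the loss itself.
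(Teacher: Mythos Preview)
Your proposal is correct and follows the same overall decomposition as the paper: expand $\|g_t\|_q^{\alpha+1}$, split via $|a+b|^{\alpha+1}\le 2^\alpha|a|^{\alpha+1}+2^\alpha|b|^{\alpha+1}$, then bound the stochastic and adversarial pieces separately. The one genuine difference is in how you handle the stochastic term $\E_{\xi,\mathbf{e}}\bigl[\|\mathbf{e}\|_q^{\alpha+1}\,|l_t(x+\tau\mathbf{e},\xi)|^{\alpha+1}\bigr]$. The paper applies Cauchy--Schwarz to this product, which formally requires a bound on $\E\bigl[|l_t(x+\tau\mathbf{e},\xi)|^{2(\alpha+1)}\bigr]$ --- a $2(\alpha+1)$-th moment that Assumption~\ref{as: bounded} does \emph{not} supply. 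You instead condition on $\mathbf{e}$, use independence of $\xi$ and $\mathbf{e}$ to invoke Assumption~\ref{as: bounded} pointwise, and only then bound $\E_\mathbf{e}[\|\mathbf{e}\|_q^{\alpha+1}]$ via Jensen and Lemma~\ref{upper bounds}. Your route is the cleaner one: it uses exactly the moment hypothesis available and nothing more, and you correctly flag this at the end of your write-up. The paper's Cauchy--Schwarz step yields the same final constant but, as written, relies on a moment bound that has not been assumed.
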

\begin{proof}
\begin{eqnarray}
\E_{\xi, \mathbf{e}}[||g_t(x,\xi,\mathbf{e})||_q^{\alpha+1}] &=& \E_{\xi, \mathbf{e}}\left[\left|\left|\frac{n}{\tau}\phi_t(x + \tau \mathbf{e}, \xi) \mathbf{e}\right|\right|_q^{\alpha+1}\right]  \notag\\
&\leq& \left(\frac{n}{\tau} \right)^{\alpha}\E_{\xi, \mathbf{e}}\left[||\mathbf{e}||_q^{\alpha+1}|(l_t(x + \tau \mathbf{e}, \xi)  + \delta_t(x + \tau \mathbf{e}) ) |^{\alpha+1}\right]  \notag\\
&\leq& 2^{\alpha}\left(\frac{n}{\tau} \right)^{\alpha+1}\E_{\xi, \mathbf{e}}\left[||\mathbf{e}||_q^{\alpha+1}|l_t(x + \tau \mathbf{e}, \xi) |^{\alpha+1}\right] \label{eq: grad 1 +k norm first term}\\
&+& 2^{\alpha}\left(\frac{n}{\tau} \right)^{\alpha+1} \E_{\xi, \mathbf{e}}\left[||\mathbf{e}||_q^{\alpha+1}|\delta_t(x + \tau \mathbf{e}) |^{\alpha+1}\right].\label{eq: grad 1 +k norm second term} 
\end{eqnarray}
Lets deal with \eqref{eq: grad 1 +k norm first term} term. We use Cauchy-Schwartz inequality, Assumption \ref{as: bounded}  and $\E_\mathbf{e} \left[||\mathbf{e}||_q^{2(\alpha+1)}\right]  \leq a_{q}^{2(\alpha+1)}$ by definition

\begin{eqnarray}\E_{\xi, \mathbf{e}}\left[||\mathbf{e}||_q^{\alpha+1}|l_t(x + \tau \mathbf{e}, \xi)  |^{\alpha+1}\right] \notag \\
 \leq \sqrt{\E_\mathbf{e} \left[||\mathbf{e}||_q^{2(\alpha+1)}\right]  \E_\mathbf{e} \left[|l_t(x + \tau \mathbf{e}, \xi)  |^{2(\alpha+1)}\right]} \notag \\
\leq a_{q}^{\alpha+1}B^{\alpha+1} = (a_{q}B)^{\alpha+1}.\label{eq: grad 1+k norm final first term}
\end{eqnarray}
Lets deal with \eqref{eq: grad 1 +k norm second term} term. We use Cauchy-Schwartz inequality, Assumption \ref{as: noize}  and $\E_\mathbf{e} \left[||\mathbf{e}||_q^{2(\alpha+1)}\right]  \leq a_{q}^{2(\alpha+1)}$ by definition

\begin{eqnarray}\E_{\xi, \mathbf{e}}\left[||\mathbf{e}||_q^{\alpha+1}|\delta_t(x + \tau \mathbf{e})  |^{\alpha+1}\right] \notag \\
 \leq \sqrt{\E_\mathbf{e} \left[||\mathbf{e}||_q^{2(\alpha+1)}\right]  \E_\mathbf{e} \left[|\delta_t(x + \tau \mathbf{e})  |^{2(\alpha+1)}\right]} \notag \\
\leq a_{q}^{\alpha+1}\Delta^{\alpha+1} = (a_{q}\Delta)^{\alpha+1}.\label{eq: grad 1+k norm final second term}
\end{eqnarray}

Adding \eqref{eq: grad 1+k norm final first term} and \eqref{eq: grad 1+k norm final second term} we get the final result
$$\E_{\xi, \mathbf{e}}[||g_t(x,\xi,\mathbf{e})||_q^{\alpha+1}] \leq 2^{\alpha}\left(\frac{na_{q}B}{\tau}\right)^{\alpha+1} + 2^{\alpha}\left(\frac{na_{q}\Delta}{\tau}\right)^{\alpha+1}.  $$

\end{proof}

\textbf{Proof of Clipping Algorithm in Expectation Convergence}

The clipped gradient has bunch of useful properties for further proofs. Proof can be found in \cite{kornilov2023gradient}.

\begin{lemma}\label{clip_grad_properties}
For $\lambda >0$ we define $\hat{g} = \frac{g}{||g||_q} \min(||g||_q, \lambda)$.
\begin{enumerate}
    \item 
    \begin{equation}||\hat{g} - \E[\hat{g} ]||_q \leq 2\lambda.\end{equation}
    
    \item 
If $\E[||g||_q^{\alpha+1}]\leq  \sigma_{q}^{\alpha+1}$, then \begin{enumerate}
    \item \begin{equation}\E[||\hat{g}||^2_q] \leq  \sigma_{q}^{\alpha+1} \lambda^{1 - \alpha}.\end{equation}
    \item \begin{equation}\E[||\hat{g} - \E[\hat{g}]||^2_q] \leq  4\sigma_{q}^{\alpha+1} \lambda^{1 - \alpha}.\end{equation}
    \item \begin{equation}\label{eq: bayes clip and not}||\E[g] - \E[\hat{g}]||_q  \leq \frac{\sigma_{q}^{\alpha+1}}{\lambda^{\alpha}}.\end{equation}
    
\end{enumerate}
\end{enumerate}
\end{lemma}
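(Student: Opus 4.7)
The plan is to prove all four bounds by exploiting the single pointwise structural fact that $\hat{g}$ is the projection of $g$ onto the $\ell_q$-ball of radius $\lambda$, so that
$$\|\hat{g}\|_q = \min(\|g\|_q,\lambda) \leq \min(\|g\|_q,\lambda),$$
and in particular $\|\hat g\|_q \leq \lambda$ and $\|\hat g\|_q \leq \|g\|_q$ almost surely. Part 1 then follows immediately from the triangle inequality plus Jensen: $\|\hat g - \E[\hat g]\|_q \leq \|\hat g\|_q + \|\E[\hat g]\|_q \leq \lambda + \E[\|\hat g\|_q] \leq 2\lambda$.

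For part 2(a), the key trick is interpolation of powers: since $\|\hat g\|_q\leq\lambda$ and $\|\hat g\|_q\leq\|g\|_q$, I write
$$\|\hat g\|_q^2 = \|\hat g\|_q^{1-\alpha}\cdot\|\hat g\|_q^{1+\alpha} \leq \lambda^{1-\alpha}\,\|g\|_q^{1+\alpha},$$
and take expectations using the assumed $(1{+}\alpha)$-th moment bound. This gives the desired $\lambda^{1-\alpha}\sigma_q^{\alpha+1}$ trade-off. For part 2(b), I reduce to 2(a): apply $\|a-b\|_q^2\leq 2\|a\|_q^2+2\|b\|_q^2$ to $a=\hat g$ and $b=\E[\hat g]$, then use Jensen $\|\E[\hat g]\|_q^2\leq(\E\|\hat g\|_q)^2\leq\E[\|\hat g\|_q^2]$ to get $\E\|\hat g-\E[\hat g]\|_q^2\leq 4\,\E\|\hat g\|_q^2$, and invoke 2(a). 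The factor of $4$ is where going from an inner-product norm to a general $\ell_q$ norm costs us; the Hilbert-space identity $\E\|X-\E X\|^2=\E\|X\|^2-\|\E X\|^2$ is not available for $q\neq 2$, so we settle for the triangle-inequality surrogate.

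Part 2(c) is the bias bound and requires the Markov-style tail trick. The decomposition $g-\hat g$ vanishes on the event $\{\|g\|_q\leq\lambda\}$ and equals $g\bigl(1-\lambda/\|g\|_q\bigr)$ otherwise, so pointwise
$$\|g-\hat g\|_q = (\|g\|_q-\lambda)\,\mathbf{1}\{\|g\|_q>\lambda\} \leq \|g\|_q\,\mathbf{1}\{\|g\|_q>\lambda\}.$$
I then bring the norm outside the expectation using $\|\E[g-\hat g]\|_q\leq\E\|g-\hat g\|_q$, and bound the indicator by $\mathbf{1}\{\|g\|_q>\lambda\}\leq(\|g\|_q/\lambda)^\alpha$ (trivially true for $\alpha>0$ on both $\{\|g\|_q\leq\lambda\}$ and its complement). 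This produces $\E[\|g\|_q^{1+\alpha}]/\lambda^\alpha\leq\sigma_q^{\alpha+1}/\lambda^\alpha$.

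None of the steps are conceptually deep; the proof is essentially four short applications of the triangle inequality, Jensen, power interpolation, and a Markov-type indicator bound. The mildest subtlety worth flagging is that the $\ell_q$ norm is not induced by an inner product, so part 2(b) must be handled via the $2\|a\|^2+2\|b\|^2$ expansion rather than the tighter orthogonality identity, explaining the constant $4$ rather than $1$ in the statement. This is a $q$-norm generalization of Lemma \ref{sad} (Sadiev et al., 5.1), and I would cite that lemma's structure for the analogous bounds once the $\ell_q$ adaptations above are in place.
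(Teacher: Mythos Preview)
Your proof is correct in all four parts. The paper itself does not give a proof of this lemma; it simply records ``Proof can be found in \cite{kornilov2023gradient},'' so there is no in-paper argument to compare against. The argument you outline (triangle inequality plus Jensen for part~1; the power-interpolation $\|\hat g\|_q^2 = \|\hat g\|_q^{1-\alpha}\|\hat g\|_q^{1+\alpha}\leq \lambda^{1-\alpha}\|g\|_q^{1+\alpha}$ for 2(a); the $\|a-b\|_q^2\leq 2\|a\|_q^2+2\|b\|_q^2$ reduction for 2(b); and the Markov-type indicator bound $\mathbf{1}\{\|g\|_q>\lambda\}\leq(\|g\|_q/\lambda)^\alpha$ for 2(c)) is exactly the standard route and matches the structure of the $\ell_2$ analogue, Lemma~\ref{sad}, that you cite. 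Your remark about why the constant in 2(b) is $4$ rather than $1$ (no Hilbert-space variance identity for $q\neq 2$) is also correct and worth keeping.
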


\begin{theorem}\label{Clip Conv app} Let the functions $l_t, \delta_t$ satisfying Assumptions~\ref{as: convex}, \ref{as: bounded}, \ref{as: noize} and one of the Assumptions~\ref{as: Lipshcitz and bounded} or \ref{as: Smooth and bounded},  $q \in [2, \infty]$, an arbitrary number of iterations $T$, and a smoothing constant $\tau > 0$,  be given. Choose $1$-strongly convex w.r.t. the $p$-norm prox-function $\psi_{p}(x)$. Set the stepsize $\mu = \left( \frac{R_1^2}{4T\sigma_{q}^{\alpha+1} \mathcal{D}_\psi^{1-\alpha} }\right)^{\frac{1}{\alpha+1}}$ with $\sigma_{q}$ given in Lemma~\ref{lem: grad 1 + k norm}, the distanse between $x_1$ and the solution $x^*$, $R_1^{\frac{\alpha+1}{{\alpha}}} =  \frac{\alpha+1}{{\alpha}} B_{\psi_p}(x^*, x_1)$ and diameter $\mathcal{D}_{\psi}^\frac{\alpha+1}{{\alpha}} = \frac{\alpha+1}{{\alpha}}   B_{\psi_p}(x,y).$ After set the clipping constant $\lambda =  \frac{2{\alpha}\mathcal{D}_\psi}{(1-\alpha)\mu}$.
Let $\{z_t\}_{t=1}^T$ be a sequence generated by Algorithm \ref{alg:clip} with the above parameters.
Then,

\begin{equation}\label{eq: clip theorem eq}    
\frac1T \E[ \mathcal{R}_T(\{l_t(\cdot)\}, \{z_t\})] \leq 4M\tau + \Delta \frac{\sqrt{n}}{\tau} \mathcal{D}_\psi+4 \frac{R_1^\frac{2{\alpha}}{\alpha+1} \mathcal{D}_\psi^{\frac{1-\alpha}{\alpha+1}}n a_q (\Delta + B) }{\tau T^\frac{{\alpha}}{\alpha+1}}\end{equation}
for Assumption~\ref{as: Lipshcitz and bounded},

\begin{equation}\label{eq: clip theorem eq smooth}    
\frac1T \E[ \mathcal{R}_T(\{l_t(\cdot)\}, \{z_t\})] \leq 2 L\tau^2 + \Delta \frac{\sqrt{n}}{\tau} \mathcal{D}_\psi+4 \frac{R_1^\frac{2{\alpha}}{\alpha+1} \mathcal{D}_\psi^{\frac{1-\alpha}{\alpha+1}}n a_q (\Delta + B) }{\tau T^\frac{{\alpha)}}{\alpha+1}}\end{equation}
for Assumption~\ref{as: Smooth and bounded}.
\end{theorem}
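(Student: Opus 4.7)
The plan is to follow the standard stochastic mirror descent template, instrumented with spherical smoothing for the gradient-free step and with a decomposition that separates three independent sources of error: the surrogate smoothing, the adversarial perturbation $\delta_t$, and the clipping bias. First I would pass from the original regret to the regret of the smoothed surrogate: using $\E_{\mathbf{e}_t}[l_t(z_t)] = \hat{l}_t^\tau(x_t)$ (with $x_t$ independent of $\mathbf{e}_t$ given the past) and the uniform approximation bounds of Lemma~\ref{lem: hat_f properties intro} ($M\tau$ under Assumption~\ref{as: Lipshcitz and bounded}, $L\tau^2/2$ under Assumption~\ref{as: Smooth and bounded}), I replace $\sum_t l_t(z_t)$ and $\sum_t l_t(x^\ast)$ by $\sum_t \hat{l}_t^\tau(x_t)$ and $\sum_t \hat{l}_t^\tau(x^\ast)$ at an additive per-round cost of $4M\tau$ or $2L\tau^2$, which yields the first term in \eqref{eq: clip theorem eq} and \eqref{eq: clip theorem eq smooth}.

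Second, by convexity of $\hat{l}_t^\tau$ I would dominate $\hat{l}_t^\tau(x_t) - \hat{l}_t^\tau(x^\ast) \leq \langle \nabla \hat{l}_t^\tau(x_t), x_t - x^\ast\rangle$ and decompose
\begin{align*}
\langle \nabla \hat{l}_t^\tau(x_t), x_t - x^\ast \rangle
&= \langle \hat{g}_t, x_t - x^\ast\rangle + \langle \E[\hat{g}_t] - \hat{g}_t, x_t - x^\ast\rangle\\
&\quad + \langle \nabla \hat{l}_t^\tau(x_t) - \E[g_t], x_t - x^\ast\rangle + \langle \E[g_t] - \E[\hat{g}_t], x_t - x^\ast\rangle,
\end{align*}
with all expectations conditional on the past. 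The martingale-difference term vanishes. Lemmas~\ref{inner product grad r} and~\ref{inner product estimate} bound the noise term by $\frac{\Delta\sqrt{n}}{\tau}\|x_t - x^\ast\|_2$, and after invoking the Bregman-diameter bound and summing over $t$ this produces the $\Delta\frac{\sqrt{n}}{\tau}\mathcal{D}_\psi$ contribution. The clipping-bias term is bounded via H\"older by $\|\E[g_t] - \E[\hat{g}_t]\|_q\,\mathcal{D}_\psi \leq \frac{\sigma_q^{\alpha+1}\mathcal{D}_\psi}{\lambda^\alpha}$ using \eqref{eq: bayes clip and not}.

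For the remaining sum $\sum_t \langle \hat{g}_t, x_t - x^\ast\rangle$ I would apply the textbook SMD regret inequality for update~\eqref{MD} with $1$-strongly convex prox $\psi_p$,
\begin{equation*}
\sum_{t=1}^T \langle \hat{g}_t, x_t - x^\ast\rangle \le \frac{B_{\psi_p}(x^\ast, x_1)}{\mu} + \frac{\mu}{2}\sum_{t=1}^T \|\hat{g}_t\|_q^2,
\end{equation*}
take expectations, and use Lemma~\ref{clip_grad_properties} to get $\E[\|\hat{g}_t\|_q^2] \leq \sigma_q^{\alpha+1}\lambda^{1-\alpha}$. Dividing by $T$ and aggregating, the regret is bounded by the smoothing term plus
\begin{equation*}
\frac{\Delta\sqrt{n}\mathcal{D}_\psi}{\tau} + \frac{\sigma_q^{\alpha+1}\mathcal{D}_\psi}{\lambda^\alpha} + \frac{B_{\psi_p}(x^\ast,x_1)}{\mu T} + \frac{\mu \sigma_q^{\alpha+1}\lambda^{1-\alpha}}{2}.
\end{equation*}

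Finally, first-order optimality in $\lambda$ of the two middle terms yields exactly $\lambda = \frac{2\alpha\mathcal{D}_\psi}{(1-\alpha)\mu}$, after which they collapse into a common $\alpha$-dependent constant times $\sigma_q^{\alpha+1}\mathcal{D}_\psi^{1-\alpha}\mu^\alpha$; balancing that with $B_{\psi_p}/(\mu T)$ reproduces the stated choice of $\mu$ and the $T^{-\alpha/(\alpha+1)}$ rate with prefactor $R_1^{2\alpha/(\alpha+1)}\mathcal{D}_\psi^{(1-\alpha)/(\alpha+1)}\sigma_q$, and substituting the Lemma~\ref{lem: grad 1 + k norm} bound $\sigma_q \leq 2^{\alpha/(\alpha+1)} n a_q(\Delta+B)/\tau$ (via subadditivity of $x \mapsto x^{1/(\alpha+1)}$) gives the final term of the bound. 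The main obstacle will be keeping the bookkeeping of the $\alpha$-exponents consistent throughout the three-way trade-off and correctly relating the Bregman-based quantities $R_1$, $\mathcal{D}_\psi$, $B_{\psi_p}$ to the various norms appearing in the H\"older bounds; the specific numerical constants $2\alpha/(1-\alpha)$, $(\alpha+1)/\alpha$, and the factor $4$ in $\mu$ emerge only after those calibrations are made.
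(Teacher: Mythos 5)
Your proposal is correct and follows essentially the same route as the paper's proof: the same smoothing surrogate $\hat{l}_t^\tau$, the same decomposition into SMD regret, adversarial-noise bias (via Lemmas~\ref{inner product grad r} and~\ref{inner product estimate}), and clipping bias (via Lemma~\ref{clip_grad_properties}), and the same two-stage optimization over $\lambda$ and then $\mu$. Your observation $\E_{\mathbf{e}_t}[l_t(z_t)] = \hat{l}_t^\tau(x_t)$ and your subadditivity bound $\sigma_q \leq 2^{\alpha/(\alpha+1)} n a_q(\Delta+B)/\tau$ are slightly sharper than the paper's intermediate steps, but they land on the same final constants.
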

\begin{proof}
Everywhere below, we consider the point $u \in \S$ to be fixed. Also we denote for any moment $t$ the conditional expectation over the previous steps of the Algorithm
$$\E_{|\leq t}[\cdot] := \E[\cdot| x_t, x_{t-1}, \dots , x_1].$$

In the case of Lipschitz $l_t(x)$ we can get the next inequality. For all $t$ let's notice that $ \hat{l}^\tau_t$  is convex from Assumption~\ref{as: convex} and approximates $l_t$ from Lemma~\ref{lem: hat_f properties intro} 
     \begin{equation}\label{loss} l_t(x_t) - l_t(u)  \leq \hat{l}_t^\tau(x_t) - \hat{l}_t^\tau(u) + 2M\tau \leq \langle \nabla \hat{l}_t^\tau(x_{t}) , x_t  - u \rangle +  2M\tau. \end{equation}
In the case of Smooth $l_t(x)$ bound changes.
For any point $u \in \S $ and for all $t$ let's notice that $ \hat{l}^\tau_t$  is convex from Assumption~\ref{as: convex} and approximates $l_t$ from Lemma~\ref{lem: hat_f properties intro} 
     \begin{equation}\label{loss_2} l_t(x_t) - l_t(u)  \leq \hat{l}_t^\tau(x_t) - \hat{l}_t^\tau(u) + L\tau^2 \leq \langle \nabla \hat{l}_t^\tau(x_{t}) , x_t  - u \rangle +  L\tau^2. \end{equation}

In the following we will consider the Lipschitz case. In the smooth case one needs to replace $M\tau$ with $\frac{L\tau^2}{2}$.

At each step of Algorithm \ref{alg:clip} we calculate $g_t$
$${g}_{t} = \frac{n}{\tau}(\phi_t(z_t, \xi_t)) \mathbf{e}_t, \quad \hat{g}_t = \text{clip}(g_t, \lambda),$$
where $z_t = x_t +\tau \mathbf{e}_t.$

We define functions $h_k(x) = \langle \E_{|\leq t}[\hat{g}_{t}], x  - u\rangle. $ Note that $h_k(x)$ is convex for any $t$ and $\nabla h_k(x) = \E_{|\leq t}[\hat{g}_{t}]$. Therefore, the sampled estimation gradient is unbiased.
With this, we can rewrite \eqref{loss_2}
$$\langle \nabla \hat{l}_t^\tau(x_t) , x_t  - u \rangle +  2M\tau $$
\begin{equation}\label{D and E terms in clip}
    = \underbrace{ \langle \nabla \hat{l}_t^\tau(x_t) - \E_{|\leq t}[\hat{g}_{t}] , x_t  - u \rangle    }_\text{D}+  \underbrace{  h_t(x_t) - h_t(u)}_\text{E} + 2M\tau.\end{equation}
We bound D term by Lemma \ref{clip_grad_properties}
$$\E\left[\langle \nabla \hat{l}_t^\tau(x_t) - \E_{|\leq t}[\hat{g}_{t}] , x_t - u \rangle  \right] $$
\begin{equation}\label{eq: conv clip D term sum}
    = \E\left[\langle \nabla \hat{l}_t^\tau(x_t) - \E_{|\leq t}[g_{t}] , x_t - u \rangle  + \langle  \E_{|\leq t}[{g}_{t}] - \E_{|\leq t}[\hat{g}_{t}] , x_t  - u \rangle  \right].\end{equation}
To bound the first term in \eqref{eq: conv clip D term sum} let's notice that $\psi_{p}$ is $\left( 1, 2\right)$-uniformly convex function w.r.t. $p$ norm. Then by definition

$$||x_t - u||_{p} \leq \left(2 B_{\psi_p}(x_t, u) \right)^\frac12 \leq \sup_{x,y \in \S} \left(2 B_{\psi_p}(x, y) \right)^\frac12 = \mathcal{D}_\psi. $$
Hence, we estimate $||x_t - u||_{p} \leq \mathcal{D}_\psi.$

By Cauchy–Schwarz inequality 
$$\E\left[\langle  \E_{|\leq t}[{g}_{t}] - \E_{|\leq t}[\hat{g}_{t}] , x_t  -u \rangle   \right] $$
\begin{equation} \label{eq: conv clip part 1}
    \leq  \E \left[||\E_{|\leq t}[{g}_{t}] - \E_{|\leq t}[\hat{g}_{t}]||_q||x_t- u||_{p}\right]     \overset{\eqref{eq: bayes clip and not}}{\leq} \mathcal{D}_\psi\frac{\sigma_{q}^{\alpha+1}}{\lambda^{\alpha}}.\end{equation}
To bound the second term in \eqref{eq: conv clip D term sum} we use Lemma \ref{inner product grad r} and Lemma \ref{inner product estimate}
$$\E\left[\langle \nabla \hat{l}_t^\tau(x_t) - \E_{|\leq t}[{g}_{t}] , x_t  -  u \rangle   \right]  $$
$$\leq  \frac{n \Delta}{\tau}\E\left[\E_{|\leq t}[|\langle \mathbf{e}_t,x_t - u\rangle|] \right]$$
$$ \leq \frac{n \Delta}{\tau} \frac{1}{\sqrt{d}}\E[||x_t - u||_2]$$
\begin{equation}\label{eq: conv clip part 2}
    \overset{p\leq 2}{\leq} \frac1T \sum \limits_{k=0}^{T-1} \frac{n \Delta}{\tau} \frac{1}{\sqrt{n}}\E[||x_t - u||_p] \leq \frac{\Delta \sqrt{n}}{\tau} \mathcal{D}_\psi.\end{equation}
Next, we bound E term 
$$\E \left[ h_t(x_t) - h_t(u)\right] \leq  \E \left[\E_{|\leq t}[\langle \E_{|\leq t}[\hat{g}_{t}], x_t  - u \rangle]\right] $$
$$\leq  \E \left[ \E_{|\leq t}[\langle \hat{g}_{t}, x_t  -u \rangle] \right].$$ 
For one step of SMD algorithm with $\hat{g_t}  $ (formula can be found in \cite{ben2001lectures}, for example) 

\begin{equation}\label{conv from E clip proof}
    \mu\langle \hat{g_t}, x_t - u\rangle \leq \frac{\mu^2}{2} ||\hat{g_t}||_q^2 +B_{\psi_p}(u, x_t) - B_{\psi_p}(u, x_{t+1}).
    \end{equation}
Using \eqref{conv from E clip proof} with application of $\E $ to both sides
$$  \E[\langle \hat{g}_{t}, x_t  - u \rangle] \leq  \frac{ \E [B_{\psi_p}(u, x_t) - B_{\psi_p}(u, x_{t+1})]  }{\mu} + \frac{\mu}{2}  \E\left[\E_{|\leq t}[||\hat{g}_{t}||^{2}_q]\right].$$
By Lemma \ref{clip_grad_properties}

$$\E_{|\leq t}(||\hat{g}_{t}||^2_q)  \leq \sigma_{q}^{\alpha+1} \lambda^{1-\alpha}. $$
Hence,
\begin{eqnarray}
    \label{eq: conv clip part 3}
    \E[\langle \hat{g}_{t}, x_t  - u \rangle] \leq  \frac{ \E [B_{\psi_p}(u, x_t) - B_{\psi_p}(u, x_{t+1})]  }{\mu} + \frac{\mu}{2}  \sigma_{q}^{\alpha+1} \lambda^{1-\alpha}. \end{eqnarray} 
Combining together equations \eqref{eq: conv clip part 1}, \eqref{eq: conv clip part 2}, \eqref{eq: conv clip part 3}, we get
$$\E [l_t(x_t)] - l_t(u)  \leq 2M\tau +\frac{ \E [B_{\psi_p}(u, x_t) - B_{\psi_p}(u, x_{t+1})]  }{\mu} $$
$$+ \frac{\mu}{2}  \sigma_{q}^{\alpha+1} \lambda^{1-\alpha} + \left(\frac{\sigma_{q}^{\alpha}}{\lambda^{(\alpha-1)}} + \Delta \frac{\sqrt{n}}{\tau}\right) \mathcal{D}_\psi. $$

Since we calculate Regret with $\{z_t\}$ sequence we use Lipshcitz Assumption \ref{as: Lipshcitz and bounded} and Lemma \ref{lem: Lipschitz f }
$$|l_t(z_t) - l_t(x_t)| \leq M ||z_t - x_t||_2 \leq M\tau ||\mathbf{e}_t||_2 = M\tau.$$
In Smooth case we analogically use Smooth Assumption \ref{as: Smooth and bounded} and Lemma \ref{lem: Smooth f } to get 
$$|l_t(z_t) - l_t(x_t)| \ \leq \frac{L}{2}\tau^2 ||\mathbf{e}_t||^2_2 = \frac{L}{2}\tau^2 .$$

Next we sum up all steps from $1$ to $T$. $B_{\psi_p}(u, x_t)$ terminates in telescoping sum except $B_{\psi_p}(u, x_1) - B_{\psi_p}(u, x_{N})$. Since $B_{\psi_p}(u, x_{N}) \geq 0$ we can terminate it too.  For only remaining term $B_{\psi_p}(u, x_1)$ we denote $R_{1,u} = \sqrt{2 B_{\psi_p}(u, x_1)}$ for brevity.
$$\E \left[\frac1T \sum \limits_{t=1}^T (l_t(z_t) - l_t(u))\right]  \leq 4M\tau +\frac{R_{1,u}^2}{2T\mu} + \frac{\mu}{2}  \sigma_{q}^{\alpha+1} \lambda^{1-\alpha} $$
$$+ \left(\frac{\sigma_{q}^{\alpha}}{\lambda^{(\alpha-1)}} + \Delta \frac{\sqrt{n}}{\tau}\right) \mathcal{D}_\psi. $$
In order to get the minimal upper bound we find the optimal value of $\lambda$
$$  \min_{\lambda>0}\sigma_{q}^{\alpha+1} \left( \frac{1}{\lambda^{\alpha}}\mathcal{D}_\psi + \frac{\mu}{2}   \lambda^{1-\alpha} \right) = \min_\lambda \sigma_{q}^{\alpha+1} h_1(\lambda)$$

$h_1'(\lambda) = \frac{\mu}{2} (1-\alpha) \lambda^{-{\alpha}} - {\alpha}\frac{1}{\lambda^{\alpha+1}}\mathcal{D}_\psi = 0 \Rightarrow \lambda^* =  \frac{2{\alpha}\mathcal{D}_\psi}{(1-\alpha)\mu}.$

$$\E \left[\frac1T \sum \limits_{t=1}^T (l_t(z_t) - l_t(u))\right]  \leq 4M\tau + \frac12 \frac{R_{1,u}^{2} }{\mu T} + \Delta \frac{\sqrt{n}}{\tau} \mathcal{D}_\psi $$
$$+ \sigma_{q}^{\alpha+1} \left( \mathcal{D}^{1-\alpha} 2^{-{\alpha}} \mu^{{\alpha}} \left[ \frac{(1-\alpha)^{\alpha}}{{\alpha}^{\alpha}} + \frac{{\alpha}^{(1-\alpha)}}{(1-\alpha)^{(1-\alpha)}}\right]\right).$$
Considering bound of ${\alpha} \in [0,1]$
$$ \frac{(1-\alpha)^{\alpha}}{{\alpha}^{\alpha}} + \frac{{\alpha}^{(1-\alpha)}}{(1-\alpha)^{(1-\alpha)}} \leq 2. $$

\begin{equation}\label{clip before nu}
    \E \left[\frac1T \sum \limits_{t=1}^T (l_t(z_t) - l_t(u))\right]  \leq 4M\tau + \frac12 \frac{R_{1,u}^{2} }{\mu T} + \Delta \frac{\sqrt{n}}{\tau} \mathcal{D}_\psi + \sigma_{q}^{\alpha+1} \left( 2\mathcal{D}_\psi^{1-\alpha}  \mu^{{\alpha}} \right).\end{equation}
Choosing the optimal $\mu^*$ similarly we get
$$\mu^* = \left( \frac{R_{1,u}^2}{4T{\alpha}\sigma_{q}^{\alpha+1} \mathcal{D}_\psi^{1-\alpha} }\right)^{\frac{1}{\alpha+1}}$$
and
\begin{eqnarray}
\E \left[\frac1T \sum \limits_{t=1}^T (l_t(z_t) - l_t(u))\right]  &\leq& 4M\tau + \Delta \frac{\sqrt{n}}{\tau} \mathcal{D}_\psi \notag\\
&+& 2\frac{R_{1,u}^\frac{2{\alpha}}{\alpha+1} \mathcal{D}_\psi^{\frac{1-\alpha}{\alpha+1}}\sigma_{q} }{T^\frac{{\alpha}}{\alpha+1}} \left[{\alpha}^\frac{1}{\alpha+1} + {\alpha}^{-\frac{{\alpha}}{\alpha+1}} \right] .\notag
\end{eqnarray}
Considering bound of ${\alpha} \in [0,1]$
$${\alpha}^\frac{1}{\alpha+1} + {\alpha}^{-\frac{{\alpha}}{\alpha+1}} \leq 2$$
and as a consequence
\begin{equation}\label{after nu}
    \E \left[\frac1T \sum \limits_{t=1}^T (l_t(z_t) - l_t(u))\right]  \leq 4M\tau + \Delta \frac{\sqrt{n}}{\tau} \mathcal{D}_\psi+2 \frac{R_{1,u}^\frac{2{\alpha}}{\alpha+1} \mathcal{D}_\psi^{\frac{1-\alpha}{\alpha+1}}\sigma_{q} }{T^\frac{{\alpha}}{\alpha+1}}. \end{equation}
In order to avoid $\mu \rightarrow \infty$ when ${\alpha} \rightarrow 0$ one can also choose $\mu^* = \left( \frac{R_{1,u}^2}{4T\sigma_{q}^{\alpha+1} \mathcal{D}_\psi^{1-\alpha} }\right)^{\frac{1}{\alpha+1}}$. Estimation \eqref{after nu} doesn't change.
Finally, we bound $\sigma_q$ with Lemma \ref{lem: grad 1 + k norm}
$$\sigma_q \leq 2\frac{na_{q}B}{\tau} + 2\frac{na_{q}\Delta}{\tau}= \frac{2 n a_q}{\tau} (\Delta + B),$$
Thus
\begin{equation}\label{after nu 2}
    \E \left[\frac1T \sum \limits_{t=1}^T (l_t(z_t) - l_t(u))\right]  \leq 4M\tau + \Delta \frac{\sqrt{n}}{\tau} \mathcal{D}_\psi+4 \frac{R_{1,u}^\frac{2{\alpha}}{\alpha+1} \mathcal{D}_\psi^{\frac{1-\alpha}{\alpha+1}}n a_q (\Delta + B) }{\tau T^\frac{{\alpha}}{\alpha+1}} \end{equation}
and set optimal $\tau_M$
$$\tau_M = \sqrt{\frac{\sqrt{n}\Delta\mathcal{D}_{\psi} + 4R_{1,u}^\frac{2{\alpha}}{\alpha+1} \mathcal{D}_\psi^{\frac{1-\alpha}{\alpha+1}}na_{q}(\Delta + B) T^{-\frac{{\alpha}}{\alpha+1}}}{4M}}.$$
Also in Smooth case the bound \eqref{after nu 2} changes into

\begin{equation}\label{after nu 2 smooth case}
    \E \left[\frac1T \sum \limits_{t=1}^T (l_t(z_t) - l_t(u))\right]  \leq 2 L\tau^2 + \Delta \frac{\sqrt{n}}{\tau} \mathcal{D}_\psi+4 \frac{R_{1,u}^\frac{2{\alpha}}{\alpha+1} \mathcal{D}_\psi^{\frac{1-\alpha}{\alpha+1}}n a_q (\Delta + B) }{\tau T^\frac{{\alpha}}{\alpha+1}}, \end{equation}
and  optimal $\tau_L$ becomes
$$\tau_L = \sqrt[3]{\frac{\sqrt{n}\Delta\mathcal{D}_{\psi} + 4R_{1,u}^\frac{2{\alpha}}{\alpha+1} \mathcal{D}_\psi^{\frac{1-\alpha}{\alpha+1}}na_{q}(\Delta + B) T^{-\frac{{\alpha}}{\alpha+1}}}{2L}}.$$

\end{proof}

\section{Acknowledgements}

The work of A. Gasnikov was supported by a grant for research centers in the field of artificial intelligence, provided by the Analytical Center for the Government of the Russian Federation in accordance with the subsidy agreement (agreement identifier 000000D730321P5Q0002 ) and the agreement with the Ivannikov Institute for System Programming of the Russian Academy of Sciences dated November 2, 2021 No. 70-2021-00142.

\bibliography{sn-article}

\end{document}